\def\temp{dvips.def}
\def\Ginclude@graphics#1{\def\temp{#1}---image \expandafter\strip@prefix\meaning\temp---}
\newcommand{\ignore}[1]{}
\newcommand{\E}{{\bf E}}
\newcommand{\RID}{{ {\rm \scriptscriptstyle RID}}}
\newtheorem{theorem}{Theorem}
\newtheorem{lemma}[theorem]{Lemma}
\begin{document}

\title{Non-Adaptive Randomized Algorithm\\ for Group Testing}
%\author{Nader H. Bshouty}
%\institute{Technion, Haifa, Israel\\
%bshouty@ca.technion.ac.il}
\author{{\bf Nader H. Bshouty}\\ Dept. of Computer Science\\ Technion,  Haifa, 32000\\
{\bf Nuha Diab} \\ Sisters of Nazareth High School \\ Grade 12\\ P.O.B. 9422, Haifa, 35661\\
{\bf Shada R. Kawar}  \\ Nazareth Baptist High School\\ Grade 11\\ P.O.B. 20, Nazareth, 16000\\
{\bf Robert J. Shahla} \\ Sisters of Nazareth High School \\
Grade 11\\ P.O.B. 9422, Haifa, 35661}
\institute{}
\maketitle

\begin{abstract}
We study the problem of group testing with a non-adaptive randomized algorithm in the random incidence design (RID) model where each entry in the test is chosen randomly independently from $\{0,1\}$ with a fixed probability $p$.

The property that is sufficient and necessary for a unique decoding is the separability of the tests, but unfortunately no linear time algorithm is known for such tests. In order to achieve linear-time decodable tests, the algorithms in the literature use the disjunction property that gives almost optimal number of tests.

We define a new property for the tests which we call semi-disjunction property. We show that there is a linear time decoding for such test and for $d\to \infty$ the number of tests converges to the number of tests with the separability property and is therefore optimal (in the RID model). Our analysis shows that, in the RID model, the number of tests in our algorithm is better than the one with the disjunction property even for small $d$.
\end{abstract}

\section{Introduction}\label{Int}
As quoted in~\cite{GT}, ``Robert Dorfman's paper in 1943,~\cite{D43}, introduced the field of Group Testing.
The motivation arose during the Second World War
when the United States Public Health Service and the Selective service embarked upon a large scale project.
The objective was to weed out all syphilitic men called up for induction.
However, syphilis testing back then was expensive and testing every soldier
individually would have been very cost heavy and inefficient. A basic breakdown of a test is:
Draw sample from a given individual,
perform required tests and
determine the presence or absence of syphilis.
Suppose we have $n$ soldiers. Then this method of testing leads to $n$ tests.
Our goal is to achieve effective testing in a
scenario where it does not make sense to test $100,000$ people to get (say) $10$ positives.
The feasibility of a more effective testing scheme hinges on the following property.
We can combine blood samples and test a combined sample together to
check if at least one soldier has syphilis''.

Group testing was originally introduced as a potential approach to the above economical
mass blood testing \cite{D43}. However it has been proven to be applicable in a variety of problems, including quality control in product testing \cite{SG59}, searching files in storage systems \cite{KS64}, sequential screening of experimental variables \cite{L62}, efficient contention resolution algorithms for multiple-access communication \cite{KS64,W85}, data compression \cite{HL02}, and computation in the data stream model \cite{CM03}. See a brief history and other applications in~\cite{Ci13,DH00,DH06,H72,MP04,ND00}
and references therein.

We now formalize the problem.
Let $S$ be the set of the $n$ {\it items} (soldiers, clones) and let $I\subseteq S$ be the set of {\it defective items} (the sick soldiers, positive clone).
Suppose we know that the number of defective items, $|I|$, is bounded by some integer~$d$.
A {\it test} or a {\it query} (pool or probing) is a set $J \subset S$. The answer to the test is $T(I,J)=1$ if $I\cap J\not=\O$ and $0$ otherwise. Here~$J$ is the set of soldiers for which their blood samples is combined. The problem is to find the defective items (sick soldiers)
with a minimum number of tests. We will assume that $S=[n]:=\{1,2,\ldots,n\}$ and will identify the test $J\subseteq S$ with an {\it assignment} $a^J\in\{0,1\}^n$ where $a^J_i=1$ if and only if $i\in J$, i.e., $a^J_i=1$ signifies that item $i$ is in test $J$. Then the answer to the test $a^J$ is $T(I,a^J)=1$ ({\it positive}) if there is $i\in I$ such that $a^J_i=1$ and $0$ ({\it negative}) otherwise.

In the {\it adaptive algorithm}, the tests can depend on the answers to the previous ones. In the {\it non-adaptive algorithm} they are independent of the previous one and; therefore, one can do all the tests in one parallel step. In all the above applications, non-adaptive algorithms are most desirable since minimizing time is of utmost importance.
The set of tests in any non-adaptive deterministic algorithm (resp. randomized) can be identified with a (resp. random) $m\times n$ {\it test matrix} $M$ (pool design) such that its rows are all the assignments $a^J$ of all the tests $J$ in the algorithm.
The above problem is also equivalent to the problem of non-adaptive deterministic (resp. randomized) learning the class of boolean disjunction of up to $d$ variables from membership queries~\cite{B13z}.

It is known that any non-adaptive {\it deterministic} algorithm must do at least $\Omega(d^2\log n/\log d)$ tests~\cite{DR82,F96,PR11,R94}.
This is $O(d/\log d)$ times more than the number of tests of the known non-adaptive randomized algorithms that do only $O(d\log n)$ tests. In this paper we study the case where the number of items $n$ is very large and the cost of the test is ``very expensive'' and in that case, deterministic algorithms are impractical.
Also, algorithms that run in super-linear time are not practical for group testing. So we need our algorithms to do a minimum number of tests and run in quasi-linear time, preferably, linear time in the {\it size of all the tests}, that is, $O(dn\log n)$.

Non-adaptive randomized algorithms have been proposed in~\cite{BBKT95,BKB95,DH06,ER63,H00,HL01}. The first folklore non-adaptive randomized algorithm known from the literature
simply chooses a random $m\times n$ test matrix $M$, for sufficiently large $m$, where each entry of $M$ is chosen randomly independently to be $0$ with probability $p$ and $1$ with probability $1-p$.
This type of algorithm is called a {\it random incidence design} (RID) and is the simplest type of randomized algorithm. In this paper we will study this type of algorithms.

If $I\subseteq S=[n]$ is the set of defective items then the vector of answers to the tests (rows of $M$) is $T(I,M):=\vee_{i\in I} M^{(i)}$ where $\vee$ is bitwise ``or'' and $M^{(i)}$ is the $i$th column of $M$.
The above parameters, $m$ and $p$, are chosen so that the probability of success of recovering the defective items (decoding) is maximal. If the number of defective items is bounded by $|I|\le d$ then in order to find the defective items with probability at least $1-\delta$, we need that with probability at least $1-\delta$, $M$ satisfies the following property: For every $J\subseteq [n]$, $|J|\le d$ and $J\not=I$ we have $T(J,M)\not=T(I,M)$. This basically says that the only set $J$ of up to $d$ items that is consistent with the answers of the tests $\wedge_{i\in I} M^{(i)}$ is $I$.
We call such a property $(I,d)$-{\it separable property} and $M$ is called an $(I,d)$-{\it separable test matrix}. Unfortunately, although this property is what we need to solve the problem, there is no known linear time (or even polynomial time) algorithm that finds the defective items from the answers of an $(I,d)$-separable test matrix. The only known algorithm is the trivial one that exhaustively goes over all possible up to $d$ sets $J$ and verifies whether $T(J,M)$ is equal to the answers of the tests. This is one of the reasons why the {\it separability property} has not been much studied in the literature.

Seb\"o, \cite{S85}, studies the easy case when the number of defective items is {\it exactly}~$d$. He shows that the best probability for the random test matrix (i.e., that gives a minimum number of tests) is $p=1-1/2^{1/d}$. In this paper, we study the general case when the number of defective items is at most $d$. We give a new analysis for the general case and show that the probability that gives minimum number of tests is $p=1-1/d$. In particular, we show that \begin{eqnarray}\label{fifi} \lim_{n\to\infty} \frac{m_{\RID}}{\ln n}\le \gamma_{d-1}+O\left(\frac{1}{d}\right) :=ed-\frac{e+1}{2}+O\left(\frac{1}{d}\right)\end{eqnarray} where $m_{\RID}$ is the minimum number of tests in the RID algorithms, $e=2.71828\cdots$ is the Euler's number and $\gamma_d$ is defined in (\ref{gamma}). But again, no polynomial time algorithm is known for such matrix even for the special case when the number of the defective items is exactly $d$.

To be able to find the defective items in {\it linear time} in the size of all the tests, that is, $O(dn\log n)$, and asymptotically optimal $O(d\log n)$ tests, algorithms in the literature use a relaxed property for~$M$. The folklore non-adaptive randomized algorithm randomly chooses $M$ such that with probability at least $1-\delta$ the following relaxed property holds: for each non defective item $i\in S\backslash I$ there is a test that contains it but does not contain the defective items. Such a test is surely negative (gives answer $0$) and is a witness for the fact that item $i$ is not defective. We call such a property an $(I,d,i)$-{\it disjunct} property where $I$ is the set of defective elements.
If the test matrix is $(I,d,i)$-disjunct for every non-defective item $i$ then we call the test matrix $(I,d)$-{\it disjunct}.
When the test matrix is $(I,d)$-disjunct, the algorithm that finds the defective items simply starts with $X=S$ and then for every negative answer, $T(I,a)=0$, removes from $X$ all the items $i$ where $a_i=1$. From the property of $(I,d)$-disjunct all the non-defective items are removed, and $X$ will eventually contain the defective items. This can be done in linear time in the size of $M$. This is why the property of {\it disjunctness} is well studied in the literature~\cite{BBKT95,BKB95,DH06,H00,HL01,HL03,HL04}. It is well known (and very easy to see) that if $M$ is $(I,d)$-disjunct then $M$ is $(I,d)$-separable, and therefore, this algorithm is required to do more tests to get this property (with probability at least $1-\delta$). For completeness, we show in Section~\ref{S1} that the best probability for this property is $p=1-1/(d+1)$ and the number of tests $m$ satisfies $\lim_{n\to\infty} {m}/{\ln n}=\gamma_{d}=ed+{(e-1)}/{2}+O({1}/{d})$ (see the definition of $\gamma_d$ in (\ref{gamma})) which is greater than the minimum number of test in (\ref{fifi}) by the constant $e$.

In this paper we give a new algorithm that runs in linear time. Our test matrix $M$ has the following property with probability at least $1-\delta$:  It is $(I,d)$-separable test matrix and $(I,d,i)$-disjunction for at least $n-n^{1/d}$ non-defective items. Therefore one can eliminate all the non-defective items except at most $n^{1/d}$ items. Then, since $M$ is $(I,d)$-separable, the exhaustive search of the defective items in $n^{1/d}$ items takes linear time. We call a test matrix with such a property, a {\it semi-disjunct test matrix}.
We show that the number of tests $m$ in this algorithm satisfies $\lim_{n\to\infty} {m}/{\ln n}=\gamma_{d-1}+O(1/d)$ which is asymptotically the same as the one for the $(I,d)$-separable test matrix. Our analysis shows that the number of tests in our algorithm is better than the one with the disjunction property even for small $d$.

There are also other types of randomized algorithms in the literature, and they all study the property
of disjunctness. For example, the {\it Random $r$-size design} (RrSD) are algorithms where each row in $M$ is a random vector $a\in\{0,1\}^n$ of weight (number of ones in $a$) $r$. Due to the messiness of the analysis of those algorithms, only limited results have been obtained. See for example~\cite{BBKT95,BKB95,H00,HL01,HL03,HL04}. In this paper we show that, for large $n$, the optimal number of tests in the RrSD algorithms converges to the optimal number of tests in the RID model.

One advantage of the RID and RrSD algorithms over the other randomized algorithm is that, in parallel machines, the tests can be generated by different processors (or laboratories) without any communication between them. In this model all the machines uses the same distribution, draw a sample and do the test. Those algorithms are {\it strongly non-adaptive} in the sense that the rows of the matrix $M$ can be also non-adaptively generated in one parallel step. Therefore, our algorithm can also find the defective items in parallel in time $O(d\log d\log n)$ and one round of parallel queries.

There are other relaxations of the separability in the literature that give other algorithms in different models that do $O(d\log n)$ tests. Aldridge et al.,~\cite{ABG14}, and De Bonis and Vaccaro~\cite{BV15} define $\epsilon$-almost $d$-separable matrices. This gives a deterministic (non-polynomial time) algorithm that does $O(d\log n)$ tests that for $1-\epsilon$ fraction of the sets of defective items $I$ succeeds to detect $I$. In \cite{BM15}, Barg and Mazumdar defines the $(d,\epsilon)$-disjunct matrices that identify a uniform random set $I$ of defective items with false positive probability of an item at most $\epsilon$.

Our paper is organized as follows. In Section~2 we give the analysis for the $(I,d)$-disjunct test matrix and give the folklore algorithm. In Section 3 we give the analysis for the $(I,d)$-separable test matrix. In Section~4 we give our new algorithm. Then in Section~5 we show the conversion of RrSD to RID. Section~6 contains some open problems.

\section{$(I,d)$-Disjunct Matrix}\label{S1}
In this section, we find the probability $p$ for which a test matrix $M$ with a minimum number of rows (tests) is $(I,d)$-disjunct with probability at least $1-\delta$. The results in this section can be found in~\cite{BBKT95,DH06,HL04}.
We give it here for completeness.

We may assume w.l.o.g that the set of defective items is $I=\{1,2,\ldots,d'\}$ for some $d'\le d$.
Recall that $M$ is $(I,d)$-disjunct if for every non-defective item $i\in [d'+1,n]$ there is a row $j$ in $M$ such that $M_{j,1}=\cdots =M_{j,d'}=0$ and $M_i=1$.
For a random $m\times n$ test matrix $M$ with entries that are chosen independently where each entry is $0$ with probability $p$ and $1$ with probability $1-p$ we have
\begin{eqnarray*}
\Pr[M&&\!\!\!\!\!\!\!\! \mbox{\ is not $(I,d)$-disjunct}]\\
&=&\Pr[(\exists i\in [d'+1,n])(\forall j)\mbox{\ not\ } (M_{j,1}=\cdots=M_{j,d'}=0, M_{j,i}=1)]\\
&\le & n (1-p^{d'}(1-p))^m\le n (1-p^{d}(1-p))^m.
\end{eqnarray*}
To have a success probability at least $1-\delta$, we need that $n (1-p^d(1-p))^m\le \delta$ and therefore (here and throughout the paper $\ln(x)^{-1}=\ln(1/x)$ and not $1/\ln(x)$)
$$m\ge \frac{\ln n+\ln (1/\delta)}{\ln \left(1-p^d(1-p)\right)^{-1}}.$$
To minimize $m$ we need to minimize $1-p^d(1-p)$. From the first derivative, we get that the optimal $m$ is obtained when $p=d/(d+1)$. See also \cite{BBKT95}, Theorem~3.6 in \cite{HL04} and Theorem 5.3.6 in~\cite{DH06}. Therefore

\begin{theorem} \cite{BBKT95} Let $p=1-1/(d+1)$ and
$$m=\gamma_{d}\cdot (\ln n+\ln(1/\delta)) $$
where
\begin{eqnarray}\label{gamma}
\gamma_d=\frac{1}{\ln \left(1-\frac{1}{d}\left(1-\frac{1}{d+1}\right)^{d+1}\right)^{-1}}.
\end{eqnarray}
With probability at least $1-\delta$ the $m\times n$ test matrix $M$ is $(I,d)$-disjunct.
\end{theorem}
In particular,

\begin{theorem}
There is a linear time, $O(dn\log n)$, randomized algorithm that does
$$m= \gamma_d\cdot ({\ln n+\ln (1/\delta)})$$
tests
and with probability at least $1-\delta$ finds the defective items.
\end{theorem}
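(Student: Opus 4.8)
The plan is to combine the preceding theorem with the elimination-style decoder already sketched in Section~\ref{Int}; almost all the work has in fact been done, and what remains is to assemble the pieces and bound the running time. First I would invoke the previous theorem with $p=1-1/(d+1)$ and $m=\gamma_d\cdot(\ln n+\ln(1/\delta))$ to produce a random $m\times n$ test matrix $M$ that, with probability at least $1-\delta$, is $(I,d)$-disjunct. Generating $M$ and carrying out its $m$ tests costs time proportional to the number of entries, $mn$.

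Next I would fix the decoder exactly as described in the introduction: initialize $X=S=[n]$, scan the answer vector $T(I,M)=\vee_{i\in I}M^{(i)}$, and for each row $j$ with answer $T(I,M)_j=0$, delete from $X$ every item $i$ with $M_{j,i}=1$. After a single pass over all rows, output $X$. Correctness then splits into two directions. No defective item is ever deleted: if $i\in I$, then any row $j$ with $M_{j,i}=1$ is positive, and deletions are triggered only by negative rows, so $i$ survives. Conversely, every non-defective item is removed: by the $(I,d)$-disjunct property, for each $i\in S\setminus I$ there is a row $j$ with $M_{j,1}=\cdots=M_{j,d'}=0$ and $M_{j,i}=1$, which is a negative row that witnesses the deletion of $i$. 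Hence, conditioned on $M$ being $(I,d)$-disjunct—an event of probability at least $1-\delta$—the decoder returns exactly $X=I$.

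Finally I would bound the running time. The decoder inspects each entry of $M$ a constant number of times, so its cost is $O(mn)$. Using $\gamma_d=ed+(e-1)/2+O(1/d)=O(d)$ from (\ref{gamma}) (treating $\delta$ as a constant, or absorbing the $\ln(1/\delta)$ term), we obtain $m=O(d\log n)$ and therefore total time $O(mn)=O(dn\log n)$, which is linear in the size of all the tests.

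The mild subtlety—and really the only one—is verifying that $\gamma_d=O(d)$, so that the matrix size, and hence the decoding time, is genuinely $O(dn\log n)$ rather than merely polynomial; this follows directly from the asymptotic expansion of $\gamma_d$. Everything else is a straightforward reassembly of the $(I,d)$-disjunct guarantee of the previous theorem with the single-pass elimination decoder, so I do not expect any real obstacle.
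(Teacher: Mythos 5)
Your proposal is correct and follows essentially the same route as the paper, which proves this theorem implicitly by combining Theorem~1 (the $(I,d)$-disjunct guarantee at $p=1-1/(d+1)$) with the single-pass elimination decoder of Figure~\ref{Alg0}: negative tests remove all their items, disjunctness removes every non-defective, and no defective is ever removed. Your additional check that $\gamma_d=O(d)$, giving total time $O(mn)=O(dn\log n)$, is exactly the bookkeeping the paper leaves to the reader via the expansion in (\ref{est1}).
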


The algorithm is in Figure~\ref{Alg0}.
\ignore{The following table gives the coefficient of $\ln n +\ln(1/\delta)$ in the optimal $m$ for small $d$

\begin{center}
\begin{tabular}{|c|c|}
\ $d$\ & $\gamma_d$ \\
\hline
 2 & $6.2366$\\
 \hline
 3 & $8.9722$\\
 \hline
 4 & $11.6999$\\
 \hline
 5 & $14.4241$\\
 \hline
 6 & $17.1465$\\
 \hline
 7 & $19.8678$\\
 \hline
 \end{tabular}
\end{center}
}

To determine  the asymptotic behavior of $m$ we use (\ref{EQ5}) and (\ref{EQ2}) in Lemma~\ref{est} in the Appendix and get
\begin{eqnarray}\label{est1}
\lim_{n\to \infty}\frac{m}{\ln n}= \gamma_d=ed+\frac{e-1}{2}-\frac{e^2+2}{24\cdot e}\cdot \frac{1}{d}+O\left(\frac{1}{d^2}\right).
\end{eqnarray}

\begin{figure}[h!]
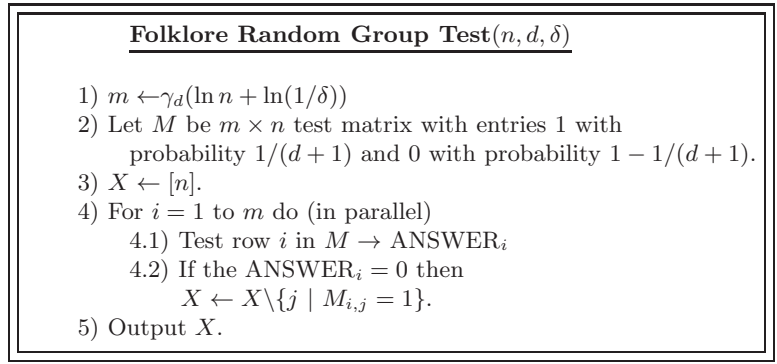

  \begin{center}
   \fbox{\fbox{\begin{minipage}{28em}
  \begin{tabbing}
  xxxx\=xxxx\=xxxx\=xxxx\= \kill
  \>\> \underline{{\bf Folklore Random Group Test$(n,d,\delta)$}}\\ \\
  \> 1) $m\gets${$ \gamma_d(\ln n+\ln(1/\delta))$}  \\
  \> 2) Let $M$ be $m\times n$ test matrix with entries $1$ with\\
  \> \> probability $1/(d+1)$ and $0$ with probability $1-1/(d+1)$.\\
  \> 3) $X\gets [n]$.\\
  \> 4) For $i=1$ to $m$ do (in parallel)\\
  \>\> 4.1) Test row $i$ in $M\to$ ANSWER$_i$\\
  \>\> 4.2) If the ANSWER$_i=0$ then\\
  \>\>\> $X\gets X\backslash \{j\ |\ M_{i,j}=1\}$.\\
  \> 5) Output $X$.
  \end{tabbing}\end{minipage}}}
  \end{center}
	\caption{An algorithm using disjunct test matrix.}
	\label{Alg0}
	\end{figure}

\section{$(I,d)$-Separable Matrix}
In this section, we find the probability $p$ for which a test matrix $M$ with a minimum number of rows (tests) is $(I,d)$-separable with probability at least $1-\delta$.
We show that, using the union bound method, the optimal number of tests is obtained when $p=1-1/d$ and the number of tests is (see (\ref{gamma}))
$$m=\gamma_{d-1}({\ln n+\ln(1/\delta)+O(d)}).$$ Notice that separability is the minimal condition we need to make sure that answers to the tests uniquely determine the defective items. Unfortunately, there is no polynomial time algorithm to find the defective items from separable matrices.
In the next section, we show that the same number of tests (up to additive $o(d)$ in $\gamma_{d-1}$) can be achieved with what we will call the semi-disjunct matrices and with such matrix one can find the defective items in linear time. So the goal of this section is to give us the minimum number of tests that is needed, and the next section will show that it is asymptotically achievable with linear time decoding.

Let $I\subseteq [n]$. Recall that for an assignment $b\in \{0,1\}^n$, $T(I,b)=1$ if $b_j=1$ for some $j\in I$ and $0$ otherwise. This is the result of the test of $b$ when $I$ is the set of defective items. For an $m\times n$ test matrix $M$, let $T(I,M)$ the column vector $(T(I,M_{i}))_{i=1,\ldots,m}$ where $M_{i}$ is the $i$th row of $M$. Recall that $M$ is $(I,d)$-separable matrix if for any set $J\not=I$ of up to $d$ items $T(I,M)\not= T(J,M)$.

Suppose $I=\{1,2,\ldots,d_1\}$ are the defective items where $d_1\le d$. For $d_2\le d$ and $0\le k\le \min(d_1,d_2)$ let ${\cal B}(n,d_2,k)$ be the set of all sets $J\subseteq [n]$ of size $|J|=d_2$ where $|I\cap J|=k$. The number of such sets is
$$|{\cal B}(n,d_2,k)|={d_1\choose k}{n-d_1\choose d_2-k}.$$

Now let $a\in \{0,1\}^n$ be a random assignment where $a_i=0$ with probability $p$ and $1$ with probability $1-p$.
The probability that $T(I,a)\not= T(J,a)$ is
the probability that $a_i=0$ for all $i\in I\cap J$ and either (1) $a_i=0$ for all $i\in I\backslash J$ and $a_j=1$ for some $j\in J\backslash I$ or (2) $a_i=0$ for all $i\in J\backslash I$ and $a_j=1$ for some $j\in I\backslash J$. Therefore
\begin{eqnarray}
\Pr[T(I,a)= T(J,a)]&=&1-\left((1-p^{d_1-k})p^{d_2}+(1-p^{d_2-k})p^{d_1}\right)\nonumber\\
&=& 1-p^{d_2}-p^{d_1}+2p^{d_1+d_2-k}.\label{ddd}
\end{eqnarray}
Therefore for a random $m\times n$ test matrix $M$, the probability that $M$ is not $(I,d)$-separable is the probability that $T(I,M)= T(J,M)$ for some $J\not=I$. This is
\begin{eqnarray*}
\Pr[&\!\! (\exists d_2,k)&(\exists J\in {\cal B}(n,d_2,k))\ \  T(I,M)= T(J,M)]\\
&\le&\sum_{d_2,k}{d_1\choose k}{n-d_1\choose d_2-k} \left(1-p^{d_2}-p^{d_1}+2p^{d_1+d_2-k}\right)^m.\\
&\le& d^22^{d}\max_{d_1,d_2,k} n^{d_2-k} \left(1-p^{d_2}-p^{d_1}+2p^{d_1+d_2-k}\right)^m\\
&\le& d^22^{d}\max_{d_1,d_2,k} P(n,d_1,d_2,k,p)
\end{eqnarray*}
where
$$P(n,d_1,d_2,k,p):=n^{d_2-k} \left(1-p^{d_2}-p^{d_1}+2p^{d_1+d_2-k}\right)^m.$$
Denote
$$\Pi= \max_{d_1,d_2,k} P(n,d_1,d_2,k,p).$$
The above implies
\begin{lemma}\label{lem1} If $\Pi\le \delta/(d^22^d)$ then with probability at least $1-\delta$ the test matrix $M$ is $(I,d)$-separable.
\end{lemma}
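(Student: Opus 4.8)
The plan is to read the lemma off directly from the union-bound computation displayed immediately above its statement; essentially no new work is needed beyond assembling the inequalities already in hand and handling one degenerate case.

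First I would pin down the failure event: $M$ fails to be $(I,d)$-separable precisely when there exists $J\subseteq[n]$ with $|J|\le d$ and $J\neq I$ for which $T(I,M)=T(J,M)$. The bad event is therefore the union over all such $J$ of the collision events $\{T(I,M)=T(J,M)\}$, and I would estimate its probability by the union bound. Next I would partition the colliding sets $J$ according to the two parameters $d_2=|J|$ and $k=|I\cap J|$, so that each $J$ lands in exactly one class $\mathcal{B}(n,d_2,k)$ of size $\binom{d_1}{k}\binom{n-d_1}{d_2-k}$, as recorded above.

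Because the $m$ rows of $M$ are drawn independently and a single row satisfies $T(I,a)=T(J,a)$ with the probability computed in (\ref{ddd}), the event $T(I,M)=T(J,M)$ has probability $\left(1-p^{d_2}-p^{d_1}+2p^{d_1+d_2-k}\right)^m$. Summing over classes reproduces the first displayed inequality, and coarsening it exactly as shown (at most $d^2$ choices of $(d_2,k)$, together with $\binom{d_1}{k}\le 2^d$ and $\binom{n-d_1}{d_2-k}\le n^{d_2-k}$) bounds the failure probability by $d^2 2^d\,\Pi$, using the definitions of $P(n,d_1,d_2,k,p)$ and $\Pi$. Invoking the hypothesis $\Pi\le \delta/(d^2 2^d)$ then gives a failure probability of at most $\delta$, so $M$ is $(I,d)$-separable with probability at least $1-\delta$.

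The one point that genuinely needs care---and the only place this routine argument could go wrong---is the bookkeeping around $J=I$. The maximization defining $\Pi$ must implicitly exclude the triple $d_2=d_1$, $k=d_1$, since that class contains only the set $J=I$, for which (\ref{ddd}) gives collision probability $1$ and hence $P(n,d_1,d_1,d_1,p)=1$; were this term retained one would have $\Pi\ge 1$ and the hypothesis $\Pi\le \delta/(d^2 2^d)$ could never be met. Once this trivial self-collision is removed from the union, the substantive content (the per-row probability (\ref{ddd})) is already established, and everything else is a routine union bound with crude binomial estimates.
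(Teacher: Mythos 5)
Your proof is correct and is essentially identical to the paper's own argument: the paper's ``proof'' of this lemma is precisely the union-bound computation displayed just before its statement (partition the sets $J\neq I$ into classes by $(d_2,k)$, use the per-row collision probability (\ref{ddd}) and independence of the $m$ rows, then coarsen with $\binom{d_1}{k}\le 2^d$, $\binom{n-d_1}{d_2-k}\le n^{d_2-k}$ and at most $d^2$ choices of $(d_2,k)$). Your careful point about excluding the self-collision $J=I$ (the triple $d_2=k=d_1$) from the maximization defining $\Pi$ is also how the paper implicitly handles it: its appendix proof of Lemma~\ref{lem} notes that $d_2=k$ forces $d_1>d_2$ ``otherwise $J=I$''.
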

To find $\Pi$, we prove in the Appendix that
\begin{lemma}\label{lem} We have
\begin{eqnarray*}
\Pi&=&\max_{d_1,d_2,k} P(n,d_1,d_2,k,p)\\
&=&\max_{0\le w\le d-1}\max\{(P(n,d,d,w,p), P(n,w,d,w,p)\}\\
&=& \max_{0\le w\le d-1}\max\{n^{d-w}(1-2p^d+2p^{2d-w})^m, n^{d-w}(1+p^d-p^w)^m\}
\end{eqnarray*}
\end{lemma}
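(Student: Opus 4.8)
The plan is to treat $\Pi$ as the maximum of $P(n,d_1,d_2,k,p)=n^{d_2-k}Q^m$ over the feasible triples $(d_1,d_2,k)$, where $Q:=1-p^{d_1}-p^{d_2}+2p^{d_1+d_2-k}$ is exactly the probability in~(\ref{ddd}); hence $Q\in[0,1]$ and $Q^m$ is nondecreasing in $Q$ for every $m\ge 1$. The convenient coordinates are the three block sizes $k=|I\cap J|$, $a:=d_1-k=|I\setminus J|$ and $b:=d_2-k=|J\setminus I|$, all nonnegative, subject to $k+a\le d$ and $k+b\le d$, with $(a,b)\ne(0,0)$ since $J\ne I$. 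In these coordinates $n^{d_2-k}=n^{b}$, and a short computation gives $Q=1-p^{k}B$ with $B:=p^{a}+p^{b}-2p^{a+b}=p^{a}(1-p^{b})+p^{b}(1-p^{a})$. Since $0<p<1$, the bracket $B$ is strictly positive whenever $(a,b)\ne(0,0)$, so $P=n^{b}\,(1-p^{k}B)^m$, and the whole problem is to maximize this over admissible $(a,b,k)$.

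I would then carry out three monotone reductions, each replacing a triple by an admissible one without decreasing $P$. First, because $Q$ (equivalently $B$) is symmetric in $a$ and $b$ while the prefactor $n^{b}$ rewards the larger exponent, swapping $d_1\leftrightarrow d_2$ when $d_1>d_2$ can only increase $P$; so I may assume $a\le b$, i.e. $d_1\le d_2$. Second, with $a,b$ held fixed, increasing $k$ leaves $n^{b}$ unchanged and strictly decreases $p^{k}B$, hence strictly increases $Q$ and $P$; since $a\le b$ the binding constraint is $k+b\le d$, so I push $k$ up to $k=d-b$, forcing $d_2=k+b=d$. At this point $d_2=d$ is established and the prefactor $n^{d-k}$ no longer depends on $d_1$.

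Third, with $d_2=d$ and $k$ fixed I optimize over $d_1\in\{k,\dots,d\}$ (equivalently $a\in\{0,\dots,b\}$) by maximizing $Q$ alone. Writing $Q=1-p^{d}-p^{d_1}\bigl(1-2p^{\,d-k}\bigr)$ exhibits $Q$ as an affine function of $p^{d_1}$, and since $p^{d_1}$ is monotone in $d_1$ the extremum sits at an endpoint, its location decided by the sign of $1-2p^{\,d-k}$: if $1-2p^{\,d-k}\ge 0$ the maximum is at $d_1=d$, giving the triple $(d,d,k)$; if $1-2p^{\,d-k}\le 0$ it is at $d_1=k$, giving $(k,d,k)$. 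Setting $w:=k$ these are precisely $P(n,d,d,w,p)$ and $P(n,w,d,w,p)$, and the constraint $J\ne I$ (which forbids $d_1=d_2=k=d$) restricts $w$ to $0\le w\le d-1$. Since every admissible triple is thus dominated by one of these two families, and both families are themselves admissible, the two maxima coincide; evaluating $Q$ at the two triples yields the closed forms $1-2p^{d}+2p^{2d-w}$ and $1+p^{d}-p^{w}$.

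The one delicate point—the main obstacle—is that $Q$ is not monotone in $d_2$ by itself (the term $-p^{d_2}$ pushes $Q$ up while $2p^{d_1+d_2-k}$ pushes it down), so one cannot simply ``raise $d_2$ to $d$''. The reparametrization resolves this by routing the increase through the overlap $k$ instead: raising $k$ with the block shape $(a,b)$ frozen is unconditionally beneficial, and it is this move—rather than a direct manipulation of $d_2$—that cleanly pins $d_2$ to $d$. The remaining sign analysis of $1-2p^{\,d-k}$ is then what splits the optimum into the two advertised types.
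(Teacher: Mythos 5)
Your proof is correct, and it rests on the same two elementary moves as the paper's: translating the overlap upward (increasing $k$ with $|I\setminus J|$ and $|J\setminus I|$ fixed) until $d_2=d$, and placing $d_1$ at an endpoint according to the sign of $1-2p^{d_2-k}$ --- the paper's two sign cases are exactly your affine-in-$p^{d_1}$ endpoint argument. The organization, though, is genuinely different and cleaner. The paper runs three separate chains of inequalities: $1-2p^{d_2-k}\ge 0$, $1-2p^{d_2-k}<0$ with $d_2>k$, and the special case $d_2=k$ (i.e.\ $J\subsetneq I$), the last needing its own ad hoc manipulation via $(1-p^{d_1-1}(1-p))^m$. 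Your reparametrization $Q=1-p^kB$ with $B=p^a(1-p^b)+p^b(1-p^a)$ makes the symmetry in $(a,b)$ visible, and the symmetrization step absorbs the paper's $d_2=k$ case entirely (when $b=0$ you swap to get $b\ge 1$), so only one sign dichotomy survives, at the end. What the paper buys is brevity; what yours buys is a uniform pipeline in which each reduction is visibly admissibility-preserving and monotone, which also makes the equality (not merely $\le$) immediate since both target families are admissible. Your one delicate dependency --- that $Q\mapsto Q^m$ is monotone only for $Q\ge 0$ --- is correctly discharged by noting $Q$ is the probability in (\ref{ddd}).
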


Now by Lemma~\ref{lem1} and \ref{lem} it immediately follows
\begin{lemma} If $$m=\max_{0\le w\le d-1}\frac{(d-w)\ln n+\ln(1/\delta)+\ln(d^22^d)}{\min(\ln(1-2p^d+2p^{2d-w})^{-1},\ln(1+p^d-p^w)^{-1})}$$ then the probability that random $m\times n$ test matrix $M$ is $(I,d)$-separable is at least~$1-\delta$.
\end{lemma}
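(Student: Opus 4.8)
The plan is to obtain the stated $m$ by simply solving the inequality $\Pi\le \delta/(d^22^d)$ of Lemma~\ref{lem1} for $m$, using the explicit form of $\Pi$ from Lemma~\ref{lem}. Since that form writes $\Pi$ as a maximum over $w\in\{0,\ldots,d-1\}$ of the two quantities $n^{d-w}(1-2p^d+2p^{2d-w})^m$ and $n^{d-w}(1+p^d-p^w)^m$, the requirement $\Pi\le \delta/(d^22^d)$ is equivalent to demanding, for \emph{every} such $w$, that both
$$n^{d-w}(1-2p^d+2p^{2d-w})^m\le \frac{\delta}{d^22^d}\qquad\text{and}\qquad n^{d-w}(1+p^d-p^w)^m\le \frac{\delta}{d^22^d}.$$

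First I would isolate $m$ in one of these, say the first, by taking natural logarithms to get $(d-w)\ln n+m\ln(1-2p^d+2p^{2d-w})\le \ln\delta-\ln(d^22^d)$. The crucial point is the sign of the logarithm: the base $1-2p^d+2p^{2d-w}$ is exactly the collision probability $\Pr[T(I,a)=T(J,a)]$ from (\ref{ddd}) with $d_1=d_2=d$, $k=w$, and the base $1+p^d-p^w$ is that probability with $d_1=w$, $d_2=d$, $k=w$; hence both bases lie strictly in $(0,1)$ for $p\in(0,1)$. Therefore $\ln(1-2p^d+2p^{2d-w})<0$, and dividing through flips the inequality. Rewriting with the paper's convention $\ln(x)^{-1}=\ln(1/x)$ and $-\ln\delta=\ln(1/\delta)$, the first condition becomes
$$m\ge \frac{(d-w)\ln n+\ln(1/\delta)+\ln(d^22^d)}{\ln(1-2p^d+2p^{2d-w})^{-1}},$$
and the second is identical with the denominator replaced by $\ln(1+p^d-p^w)^{-1}$.

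To satisfy both conditions for a fixed $w$ it suffices to take $m$ at least the common numerator divided by the smaller denominator, i.e. by $\min(\ln(1-2p^d+2p^{2d-w})^{-1},\ln(1+p^d-p^w)^{-1})$; and to meet them for every $w$ it suffices to take the maximum of that threshold over $w$. This is precisely the value of $m$ asserted in the lemma, so with this $m$ we have $\Pi\le \delta/(d^22^d)$, and Lemma~\ref{lem1} then yields that $M$ is $(I,d)$-separable with probability at least $1-\delta$.

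The argument is otherwise routine arithmetic; the only step needing care is the sign bookkeeping. One must confirm that both logarithm bases lie in $(0,1)$ before dividing, since this is simultaneously what flips the inequality in the correct direction and what makes the denominators $\ln(\cdot)^{-1}$ positive, so that the resulting threshold for $m$ is a genuine positive lower bound. This is guaranteed exactly by reading the two bases as the probabilities in (\ref{ddd}), so I expect no real obstacle beyond keeping those signs and the $\ln(x)^{-1}$ convention straight.
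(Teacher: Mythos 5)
Your proposal is correct and follows exactly the paper's route: the paper derives this lemma by stating that it ``immediately follows'' from Lemma~\ref{lem1} (the bound $\Pi\le \delta/(d^22^d)$ suffices) together with the explicit form of $\Pi$ from Lemma~\ref{lem}, and your argument just fills in the routine logarithm and sign manipulation that the paper leaves implicit. Your care in verifying that both bases are probabilities in $(0,1)$ (via (\ref{ddd}) with $d_1=d_2=d,\ k=w$ and $d_1=k=w,\ d_2=d$) is exactly the right justification for the inequality flip and the positivity of the denominators.
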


In particular, for the minimum number of tests $m$ we have
$$\lim_{n\to\infty}\frac{m}{\ln n}=\min_p \max_{0\le w\le d-1}\max\left(T_1(d,w,p),T_2(d,w,p)
\right)$$ where
$$T_1(d,w,p)=\frac{d-w}{\ln(1+p^d-p^w)^{-1}}, \ \ \ T_2(d,w,p)=\frac{d-w}{\ln(1-2p^d+2p^{2d-w})^{-1}}.$$

Therefore, our goal now is to find the probability $p$ that minimizes
$$T=\max_{0\le w\le d-1}\max\left(T_1(d,w,p),T_2(d,w,p)
\right).$$
The probability that gives the minimum $T$ is denoted by $p^*$.
It is easy to see that
$$T^*_1(d,w):=\min_{0\le p\le 1} T_1(d,w,p)=\frac{d-w}{\ln(1+p_{1,w}^d-p_{1,w}^w)^{-1}}$$ with the global minimum point $$p_{1,w}=\left(\frac{w}{d}\right)^{\frac{1}{d-w}}$$
and
$$T^*_2(d,w):=\min_{0\le p\le 1} T_2(d,w,p)=\frac{d-w}{\ln(1-2p_{2,w}^d+2p_{2,w}^{2d-w})^{-1}}$$
with the global minimum point  $$p_{2,w}=\left(\frac{d}{2d-w}\right)^{\frac{1}{d-w}}.$$

To find $p^*$ we will first use the following simple fact
\begin{lemma}\label{help} Let $f_0,f_1,\ldots,f_t$ be functions $[0,1]\to \Re\cup \{\infty\}$. If $x_0$ is a global minimum point for $f_0$ and $f_0(x_0)> f_i(x_0)$ for all $i=1,\ldots,t$ then the global minimum point of $f=\max_{0\le i\le t}f_i$ is $x_0$ and $\min_{0\le x\le 1} f=f_0(x_0)$.
\end{lemma}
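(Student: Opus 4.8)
The plan is to establish two matching bounds on $f$ that together pin down both the minimizing point and the minimum value. The entire argument rests on a single structural feature: since $f=\max_{0\le i\le t}f_i$ is the pointwise maximum of its constituents, it dominates each of them, and in particular $f\ge f_0$ pointwise. Combining this with the hypothesis that $x_0$ globally minimizes $f_0$ will give a uniform lower bound, and evaluating $f$ at $x_0$ will show that this bound is attained.

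First I would prove the lower bound $f(x)\ge f_0(x_0)$ for every $x\in[0,1]$. Indeed, because $f$ is the pointwise maximum of the $f_i$ we have $f(x)\ge f_0(x)$, and because $x_0$ is a global minimum point of $f_0$ we have $f_0(x)\ge f_0(x_0)$; chaining these two inequalities yields $f(x)\ge f_0(x_0)$ uniformly in $x$. Next I would evaluate $f$ exactly at $x_0$. This is where the hypothesis $f_0(x_0)>f_i(x_0)$ for all $i\ge 1$ enters: it says that among the values $f_0(x_0),f_1(x_0),\ldots,f_t(x_0)$ the quantity $f_0(x_0)$ is strictly the largest, so the maximum defining $f(x_0)$ is attained at the index $0$, giving $f(x_0)=f_0(x_0)$.

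Combining the two steps, $f(x_0)=f_0(x_0)\le f(x)$ for all $x\in[0,1]$, which is precisely the assertion that $x_0$ is a global minimum point of $f$ and that $\min_{0\le x\le 1}f=f_0(x_0)$. Note that the argument uses only order comparisons between values, so the fact that the $f_i$ take values in the extended reals $\Re\cup\{\infty\}$ causes no difficulty (the domination step holds verbatim when some $f_i(x)=\infty$, and the hypothesis $f_0(x_0)>f_i(x_0)$ forces $f_0(x_0)$ to be finite).

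There is no substantive obstacle here; the proof is a short two-inequality sandwich. The only point that genuinely requires the stated hypothesis is that the strict domination $f_0(x_0)>f_i(x_0)$ must hold for \emph{every} $i\ge 1$ simultaneously, so that no other $f_i$ can lift the maximum above $f_0(x_0)$ at the candidate point $x_0$. I would remark in passing that strictness is not actually needed to compute the minimum value — non-strict domination would already give $f(x_0)=f_0(x_0)$, since $f_0(x_0)$ is one of the terms in the maximum — but the strict form is what cleanly singles out $x_0$ as the relevant minimizer in the subsequent applications of the lemma.
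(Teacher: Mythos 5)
Your proof is correct and follows essentially the same two-step argument as the paper: the chain $f(x)\ge f_0(x)\ge f_0(x_0)$ gives the uniform lower bound, and the strict domination $f_0(x_0)>f_i(x_0)$ gives $f(x_0)=f_0(x_0)$, so the bound is attained at $x_0$. (One harmless slip in your aside: the hypothesis does not force $f_0(x_0)$ to be finite --- it could be $\infty$ with all $f_i(x_0)$ finite --- but the argument never needs finiteness, so nothing is affected.)
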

\begin{proof} From the definition of $f$ we have $f(x)\ge f_0(x)\ge f_0(x_0)$ for all $x$. On the other hand since $f_0(x_0)> f_i(x_0)$ for all $i=1,\ldots,t$ we have $f(x_0)=\max_{0\le i\le t}f_i(x_0)=f(x_0)$. Therefore $x_0$ is global minimum of $f$.\qed
\end{proof}

In particular, we now show that
\begin{lemma} We have $p^*=p_{1,d-1}=1-1/d$ and
\begin{eqnarray*}
\lim_{n\to\infty}\frac{m}{\ln n}&=&\min_p T=T_1^*(d,d-1)=T_1(d,d-1,p^*)\\
&=& \frac{1}{\ln\left(1-\frac{1}{d}\left(1-\frac{1}{d}\right)^{d-1}\right)^{-1}}
=\frac{1}{\ln\left(1-\frac{1}{d-1}\left(1-\frac{1}{d}\right)^{d}\right)^{-1}}=\gamma_{d-1}.
\end{eqnarray*}
\end{lemma}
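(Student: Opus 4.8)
The plan is to apply Lemma~\ref{help} with the distinguished function $f_0=T_1(d,d-1,\cdot)$ and the remaining $2d-1$ functions being $T_1(d,w,\cdot)$ for $0\le w\le d-2$ together with $T_2(d,w,\cdot)$ for $0\le w\le d-1$. First I would record that the global minimum point of $f_0$ is $p_{1,d-1}=\left(\frac{d-1}{d}\right)^{1/(d-(d-1))}=1-\frac1d=:q$, so that $x_0=q$ is the candidate $p^*$ and $f_0(x_0)=T_1^*(d,d-1)$. By the lemma it then suffices to prove the two families of \emph{strict} domination inequalities at the single point $x_0=q$: (i) $T_1(d,d-1,q)>T_1(d,w,q)$ for every $0\le w\le d-2$, and (ii) $T_1(d,d-1,q)>T_2(d,w,q)$ for every $0\le w\le d-1$. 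Once these hold, Lemma~\ref{help} gives $p^*=q=1-1/d$ and $\min_p T=f_0(x_0)=T_1(d,d-1,q)$, and it only remains to simplify this value.

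For (i) I would show that $w\mapsto T_1(d,w,q)$ is strictly increasing on $\{0,\dots,d-1\}$, so its maximum sits at $w=d-1$. Writing $T_1(d,w,q)=(d-w)/\ln(1/H)$ with $H=1+q^d-q^w$ and differentiating in $w$, the claim reduces to $H\ln H>(d-w)\,q^w\ln q$. Using $1-H=q^w(1-q^{d-w})$ and $\ln q=\ln(1-1/d)$, the leading order of this inequality is exactly $q^{d-w}\ge 1-(d-w)/d$, that is Bernoulli's inequality $(1-1/d)^{s}\ge 1-s/d$ with $s=d-w$, which is an equality at $s=1$ (the boundary $w=d-1$) and strict for $s\ge 2$. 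The work here is to control the higher-order terms of $\ln(1/H)$ so that the Bernoulli gap dominates them uniformly in $d$ and $w$.

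For (ii) I would pass to the variable $s=d-w\in\{1,\dots,d\}$ and, after cross-multiplying the two ratios and exponentiating, reduce the inequality to $\phi(s)>0$, where $\phi(s)=\bigl(1-q^{d-1}/d\bigr)^{s}-\bigl(1-2q^{d}(1-q^{s})\bigr)$. One checks $\phi(0)=0$ and $\phi'(s)=a^{s}\ln a+2q^{d+s}\ln(1/q)$ with $a:=1-q^{d-1}/d$. Since $a-q=\tfrac1d(1-q^{d-1})>0$ we have $a>q$, so factoring $\phi'(s)=q^{s}\bigl[(a/q)^{s}\ln a+2q^{d}\ln(1/q)\bigr]$ shows the bracket is \emph{strictly decreasing} in $s$; as it is positive at $s=0$ (for $d\ge 3$, equivalently $q>1/2$) it changes sign exactly once, so $\phi$ is quasi-concave --- increasing then decreasing. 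Hence on $[0,d]$ the minimum of $\phi$ is attained at an endpoint, and since $\phi(0)=0$ it follows that $\phi(s)>0$ for $s\in(0,d]$ provided $\phi(d)>0$.

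I expect the endpoint bound $\phi(d)>0$, i.e.\ $(1-q^{d-1}/d)^{d}>1-2q^{d}+2q^{2d}$, to be the main obstacle: the first-order Bernoulli estimate $(1-q^{d-1}/d)^{d}\ge 1-q^{d-1}$ is already too weak (it fails at $d=3$), so a second-order expansion of $(1-q^{d-1}/d)^{d}$ is needed, and the small cases $d=2,3$ may have to be treated by hand. Finally, with (i) and (ii) in place Lemma~\ref{help} yields $p^*=1-1/d$ and $\min_p T=T_1(d,d-1,1-1/d)=\dfrac{1}{\ln\bigl(1-\frac1d q^{d-1}\bigr)^{-1}}$; the two displayed forms of $\gamma_{d-1}$ then follow from the elementary identity $\frac1d(1-\tfrac1d)^{d-1}=\frac{1}{d-1}(1-\tfrac1d)^{d}$ together with the definition~(\ref{gamma}) of $\gamma_d$ with $d$ replaced by $d-1$.
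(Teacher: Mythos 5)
Your overall strategy coincides with the paper's: invoke Lemma~\ref{help} with $f_0=T_1(d,d-1,\cdot)$, whose global minimum point is $p^*=1-1/d$, and verify the two families of domination inequalities at that single point (strictness is not actually needed there; the paper itself verifies them non-strictly). The difference is in how the dominations are attacked: the paper uses the log-ratio bound of Lemma~\ref{main} together with Bernoulli's inequality (Lemma~\ref{Ap01}) and a monotonicity trick for $(1-x/z)^{1/x}$ (Lemma~\ref{App02}), whereas you propose monotonicity in $w$ for the $T_1$ family and a quasi-concavity/endpoint argument in $s=d-w$ for the $T_2$ family. Your reductions are correct as far as they go, but both halves end in unverified obligations, so this is a plan rather than a proof. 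The $T_1$ half is easy to close (no ``uniform control of higher-order terms'' is needed): your derivative condition $H\ln H>(d-w)q^w\ln q$ follows from two applications of $\ln y\le y-1$, namely $H\ln(1/H)\le 1-H=q^w(1-q^{d-w})$ and $1-q^{d-w}<(d-w)\ln(1/q)$.

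The serious gap is the endpoint bound $\phi(d)\ge 0$, which you explicitly defer as ``the main obstacle''; it is the entire content of the $w=0$ (disjoint-support) case, and it is not a formality --- in fact it is \emph{false} for $d=2$. There $q=1/2$, $a=3/4$, and $\phi(2)=(3/4)^2-\bigl(1-\tfrac12+\tfrac18\bigr)=\tfrac{9}{16}-\tfrac{10}{16}<0$; equivalently $T_2(2,0,1/2)=2/\ln(8/5)\approx 4.26$ exceeds $T_1(2,1,1/2)=\gamma_1\approx 3.48$, so the domination claim itself fails at $d=2$, and no hand-treatment of that case can rescue the plan as stated. (Pushed to completion, your approach actually exposes a defect in the paper: Lemma~\ref{App02} establishes inequality (\ref{local}) only for $w\in[1,d-1]$ and silently omits $w=0$, which is exactly your endpoint $s=d$.) For $d\ge 3$ the endpoint bound does hold --- e.g.\ at $d=3$ one has $(23/27)^3\approx 0.618\ge 425/729\approx 0.583$, and asymptotically $a^d\to e^{-1/e}\approx 0.69$ versus $1-2/e+2/e^2\approx 0.53$ --- so your argument could be completed for $d\ge 3$; but as written, with $\phi(d)>0$ unproven and unprovable at $d=2$, the proposal has a genuine gap.
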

\begin{proof} We use Lemma~\ref{help}. The point $p^*=p_{1,d-1}=1-1/d$ is a global minimum point of $T_1(d,d-1,p)$. We now show that $T_1(d,w,p^*)\le T_1(d,d-1,p^*)$ and $T_2(d,w,p^*)\le T_1(d,d-1,p^*)$ for all $w=0,1,\ldots,d-1$.

The proof of the fact $T_1(d,w,p^*)\le T_1(d,d-1,p^*)$ is in Lemma~\ref{Ap01} in the Appendix
and the proof of the fact $T_2(d,w,p^*)\le T_1(d,d-1,p^*)$ is in Lemma~\ref{App02} in the Appendix.\qed
\end{proof}

This implies
\begin{theorem}\label{Th3}
Let $p=1-1/d$ and
$$m=\gamma_{d-1}({\ln n+\ln(1/\delta)+O(d)}) $$
where $$\gamma_{d-1}=\frac{1}{\ln\left(1-\frac{1}{d-1}\left(1-\frac{1}{d}\right)^{d}\right)^{-1}}.$$ With probability at least $1-\delta$ the $m\times n$ test matrix $M$ is $(I,d)$-separable.
\end{theorem}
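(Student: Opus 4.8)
The plan is to assemble the theorem directly from the chain of lemmas already established; essentially all the analytic difficulty has been discharged in the preceding lemmas, and what remains is to rewrite the explicit expression for $m$ into the advertised form $\gamma_{d-1}(\ln n+\ln(1/\delta)+O(d))$. I would start from the lemma stating that the matrix is $(I,d)$-separable with probability at least $1-\delta$ once
$$m=\max_{0\le w\le d-1}\frac{(d-w)\ln n+\ln(1/\delta)+\ln(d^22^d)}{\min\!\left(\ln(1-2p^d+2p^{2d-w})^{-1},\,\ln(1+p^d-p^w)^{-1}\right)},$$
and substitute the optimal probability $p=p^*=1-1/d$ identified in the preceding lemma.

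The one computation worth isolating is the rewriting of the denominator. Since $\ln(1+p^d-p^w)^{-1}=(d-w)/T_1(d,w,p)$ and $\ln(1-2p^d+2p^{2d-w})^{-1}=(d-w)/T_2(d,w,p)$, the denominator equals $(d-w)/\max(T_1(d,w,p^*),T_2(d,w,p^*))$, so each term in the maximum defining $m$ becomes
$$\max\!\left(T_1(d,w,p^*),T_2(d,w,p^*)\right)\left(\ln n+\frac{\ln(1/\delta)+\ln(d^22^d)}{d-w}\right).$$
I would then bound the two factors separately. The preceding lemma (through Lemma~\ref{help} together with Lemmas~\ref{Ap01} and~\ref{App02}) gives $T_1(d,w,p^*)\le\gamma_{d-1}$ and $T_2(d,w,p^*)\le\gamma_{d-1}$ for every $w\in\{0,\dots,d-1\}$, bounding the first factor by $\gamma_{d-1}$; and because $w\le d-1$ forces $d-w\ge1$, the second factor is at most $\ln n+\ln(1/\delta)+\ln(d^22^d)$. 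As both factors are nonnegative, multiplying the two bounds gives $m\le\gamma_{d-1}\bigl(\ln n+\ln(1/\delta)+\ln(d^22^d)\bigr)$.

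Finally I would observe that $\ln(d^22^d)=2\ln d+d\ln 2=O(d)$, which absorbs the last additive term inside the parentheses and yields exactly $m\le\gamma_{d-1}(\ln n+\ln(1/\delta)+O(d))$. Since each base $1-2p^d+2p^{2d-w}$ and $1+p^d-p^w$ is strictly below $1$ for $w\le d-1$, the quantity $\Pi$ is non-increasing in $m$, so taking $m$ as large as the advertised value keeps $\Pi\le\delta/(d^22^d)$ and the separability conclusion follows from Lemma~\ref{lem1}. I do not anticipate a genuine obstacle: the hard steps---locating $p^*=1-1/d$ and proving the two comparison inequalities $T_1,T_2\le\gamma_{d-1}$---are already done, and the only care needed is to keep the two $(d-w)$ factors from cancelling against the wrong term and to check that $\ln(d^22^d)$ really is $O(d)$, entering the parentheses, rather than inflating the leading order.
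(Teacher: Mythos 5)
Your proof is correct and follows essentially the same route as the paper: you invoke the same chain of lemmas (the $m$-bound lemma derived from Lemmas~\ref{lem1} and~\ref{lem}, the substitution $p^*=1-1/d$, and the comparison inequalities of Lemmas~\ref{Ap01} and~\ref{App02} giving $T_1,T_2\le\gamma_{d-1}$). The only difference is that you make explicit the finite-$n$ bookkeeping (the division of the additive terms by $d-w$, the absorption of $\ln(d^22^d)$ into $O(d)$, and the monotonicity of $\Pi$ in $m$) that the paper compresses into ``This implies.''
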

By (\ref{est1}) we get
\begin{eqnarray}\label{est2}
\gamma_{d-1}=ed-\frac{e+1}{2}+O\left(\frac{1}{d}\right).
\end{eqnarray}

The following table gives the coefficient of $\ln n +\ln(1/\delta)$

\begin{center}
\begin{tabular}{|c|c|c|}\hline
\ $d$\ & $(I,d)$-Disjunct & $(I,d)$-Seperable\\
\hline\hline
 2 & $6.2366$& $3.4761$\\
 \hline
 3 & $8.9722$& $6.2366$\\
 \hline
 4 & $11.6999$& $8.9722$\\
 \hline
 5 & $14.4241$& $11.6999$\\
 \hline
 6 & $17.1465$ & $14.4241$\\
 \hline
 7 & $19.8678$& $17.1465$\\
 \hline
 \end{tabular}
\end{center}

The difference of the coefficients in the sizes is
$$\gamma_{d}-\gamma_{d-1}= e+O\left(\frac{1}{d}\right)\longrightarrow e.$$

\section{An Algorithm with Linear Decoding Time}
In this section we give our algorithm, show that it does a minimum number of tests and runs in linear time.

We say that a matrix $M$ is {\it $(I,d)$-semi-disjunct matrix} if it is $(I,d)$-separable and $(I,d,i)$-disjunct for at least $n-n^{1/d}$ items $i$.

We prove
\begin{theorem}\label{TH01}
Let $p=1-1/d$ and
$$m=\frac{\left(1-\frac{1}{d}\right)\ln n+\ln \frac{1}{\delta}+O(d)}{\ln \left(1-\frac{1}{d}\left( 1-\frac{1}{d}\right)^d\right)^{-1}}.$$
With probability at least $1-\delta$ the $m\times n$ test matrix $M$ is $(I,d)$-semi-disjunct matrix.

\end{theorem}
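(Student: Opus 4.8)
The plan is to control the two defining requirements of a semi-disjunct matrix separately and then combine them by a union bound. Assume without loss of generality that $I=\{1,\dots,d_1\}$ with $d_1\le d$. Let $B_{\mathrm{sep}}$ be the event that $M$ is not $(I,d)$-separable, and let $B_{\mathrm{dis}}$ be the event that more than $n^{1/d}$ items fail to be $(I,d,i)$-disjunct. I would show that the stated $m$ forces $\Pr[B_{\mathrm{sep}}]\le\delta/2$ and $\Pr[B_{\mathrm{dis}}]\le\delta/2$, so that $M$ is $(I,d)$-semi-disjunct with probability at least $1-\delta$.

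For the disjunctness-for-most-items part I would use the first moment method. Let $X$ count the non-defective $i$ for which no row of $M$ witnesses $i$, i.e. no row has all defective coordinates equal to $0$ and coordinate $i$ equal to $1$. For a single row this witnessing probability is $p^{d_1}(1-p)$, so by independence across rows and since $p=1-1/d$, $1-p=1/d$, $d_1\le d$,
$$\Pr[\,i\text{ is not witnessed}\,]=\bigl(1-p^{d_1}(1-p)\bigr)^m\le\Bigl(1-\tfrac1d\bigl(1-\tfrac1d\bigr)^{d}\Bigr)^m.$$
By linearity of expectation $\E[X]\le n\bigl(1-\tfrac1d(1-\tfrac1d)^{d}\bigr)^m$, and Markov's inequality gives
$$\Pr[B_{\mathrm{dis}}]\le\Pr[X\ge n^{1/d}]\le n^{1-1/d}\Bigl(1-\tfrac1d\bigl(1-\tfrac1d\bigr)^{d}\Bigr)^m.$$
Forcing this to be at most $\delta/2$ and solving for $m$ yields exactly the formula in the statement, with $\ln 2$ absorbed into the $O(d)$ term; the at most $d$ defective items (which can never be witnessed) are swallowed by the $n^{1/d}$ exceptional set for large $n$. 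Only the expectation is used, so the dependence among the events "$i$ is not witnessed" is irrelevant.

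For separability I would invoke Section~3 verbatim with $\delta$ replaced by $\delta/2$: at the optimal $p=1-1/d$, Lemma~\ref{lem1} together with Lemma~\ref{lem} gives $\Pr[B_{\mathrm{sep}}]\le\delta/2$ as soon as $m\ge\gamma_{d-1}\bigl(\ln n+\ln(1/\delta)+O(d)\bigr)$. The remaining point is that the single $m$ in the statement already dominates this. Writing $q=1-1/d$, the stated $m$ has $\ln n$-coefficient $q/\ln(1-\tfrac1d q^{d})^{-1}$ and $\ln(1/\delta)$-coefficient $1/\ln(1-\tfrac1d q^{d})^{-1}$, whereas separability needs the smaller coefficient $\gamma_{d-1}=1/\ln(1-\tfrac1d q^{d-1})^{-1}$. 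Comparing $q\ln(1-\tfrac1d q^{d-1})$ with $\ln(1-\tfrac1d q^{d})$ via the auxiliary function $h(q)=\ln(1-aq)-q\ln(1-a)$, where $a=\tfrac1d q^{d-1}$ (so $h$ vanishes at $q=0,1$ and is positive in between), shows both stated coefficients strictly exceed $\gamma_{d-1}$; choosing the constant hidden in the $O(d)$ large enough then makes the stated $m$ beat the separability bound for all $n$.

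The main obstacle is precisely this last comparison. The separability analysis sits over the denominator $\ln(1-\tfrac1d q^{d-1})^{-1}$, while the disjunctness-for-most analysis sits over the slightly different denominator $\ln(1-\tfrac1d q^{d})^{-1}$; the two inner expressions differ by the factor $q$, reflecting that a witness needs one extra coordinate set to $1$. I must verify that the disjunctness requirement is the binding one, so that one value of $m$ simultaneously guarantees both events. The $O(d)$ slack in the numerator is exactly what absorbs the weaker separability requirement, whose additive cost is of order $O(d)/\ln(1-\tfrac1d q^{d})^{-1}=\Theta(d^2)$, together with the factor-of-two loss from the union bound.
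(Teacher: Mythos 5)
Your proposal is correct and follows essentially the same route as the paper: a union bound splitting $\delta$ between separability (handled by invoking Theorem~\ref{Th3}) and the first-moment/Markov argument showing at most $n^{1/d}$ items fail to be witnessed, followed by the key coefficient comparison showing the stated $m$ dominates $\gamma_{d-1}(\ln n+\ln(1/\delta)+O(d))$. The only difference is cosmetic: you prove the comparison $\left(1-\frac{1}{d}\right)/\ln\left(1-\frac{1}{d}\left(1-\frac{1}{d}\right)^d\right)^{-1}\ge\gamma_{d-1}$ via concavity of the logarithm, whereas the paper's Lemma~\ref{Las} derives it from the series-expansion bound of Lemma~\ref{main}; both are valid one-line arguments.
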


In particular we prove
\begin{theorem}\label{TH02}
There is a linear time non-adaptive randomized algorithm that does
$$m= \frac{\left(1-\frac{1}{d}\right)\ln n+\ln \frac{1}{\delta}+O(d)}{\ln \left(1-\frac{1}{d}\left( 1-\frac{1}{d}\right)^d\right)^{-1}}$$
tests
and with probability at least $1-\delta$ finds the defective items.

In particular,
$$\frac{m}{\ln n}=\frac{1-\frac{1}{d}}{\ln \left(1-\frac{1}{d}\left( 1-\frac{1}{d}\right)^d\right)^{-1}}=
\gamma_{d-1}+O\left(\frac{1}{d}\right)\to \gamma_{d-1}$$ is asymptotically equal to the
constant that is achieved by the $(I,d)$-separability property.
\end{theorem}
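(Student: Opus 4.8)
The plan is to prove Theorem~\ref{TH02} by combining Theorem~\ref{TH01} with an efficient two-phase decoding procedure, and then to verify the asymptotic claim by a Taylor expansion of the explicit formula for $m$. First I would invoke Theorem~\ref{TH01}: with $p=1-1/d$ and the stated value of $m$, the test matrix $M$ is $(I,d)$-semi-disjunct with probability at least $1-\delta$; that is, $M$ is $(I,d)$-separable and simultaneously $(I,d,i)$-disjunct for all but at most $n^{1/d}$ of the non-defective items. I would then describe the decoding algorithm that runs on such an $M$ and argue its correctness and running time.

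The decoding proceeds in two phases. In the elimination phase, I start with $X=[n]$ and, exactly as in the folklore algorithm of Figure~\ref{Alg0}, for every negative test answer $T(I,a)=0$ I remove from $X$ all items $i$ with $a_i=1$. By the definition of the disjunct property, every non-defective item $i$ for which $M$ is $(I,d,i)$-disjunct is witnessed by some negative row and is therefore removed, while no defective item is ever removed (defective items always set some negative-row entry to $0$). Hence after this phase $X$ contains all $d_1\le d$ defective items together with at most $n^{1/d}$ stray non-defective items, so $|X|\le d+n^{1/d}$. This phase touches each entry of $M$ a constant number of times and so costs $O(mn)=O(dn\log n)$ time. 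In the search phase, I exploit the $(I,d)$-separability of $M$: the true defective set $I$ is the unique subset of $X$ of size at most $d$ whose test vector equals the observed answer vector $T(I,M)$. Since $|X|\le d+n^{1/d}$, the number of candidate subsets of size at most $d$ is $\binom{|X|}{\le d}=(n^{1/d})^{O(d)}=n^{O(1)}$; a more careful count using $\binom{d+n^{1/d}}{d}=O(n)$ and evaluating each candidate's test vector in $O(m)=O(d\log n)$ time keeps the total at $O(dn\log n)$, and separability guarantees exactly one candidate matches, which is $I$.

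Having established the algorithm, the remaining task is the asymptotic statement. Here I would start from the exact expression for $m/\ln n$ in the theorem, namely
$$\frac{m}{\ln n}=\frac{1-\frac{1}{d}}{\ln\left(1-\frac{1}{d}\left(1-\frac{1}{d}\right)^d\right)^{-1}},$$
and compare it with $\gamma_{d-1}=\left(\ln\left(1-\frac{1}{d}\left(1-\frac{1}{d}\right)^{d-1}\right)^{-1}\right)^{-1}$ from (\ref{gamma}). The two differ only in that the numerator carries an extra factor $1-1/d$ and the exponent inside the logarithm is $d$ rather than $d-1$. I would expand both denominators, using $\left(1-\frac1d\right)^d=e^{-1}(1+O(1/d))$ together with $\ln(1-x)^{-1}=x+x^2/2+\cdots$, to show that both quantities equal $ed-(e+1)/2+O(1/d)$, exactly the value of $\gamma_{d-1}$ recorded in (\ref{est2}); the spurious factor $1-1/d$ and the shift in the exponent each contribute only an $O(1/d)$ correction. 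This yields $m/\ln n=\gamma_{d-1}+O(1/d)\to\gamma_{d-1}$.

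The main obstacle is the running-time bookkeeping in the search phase: one must check that the crude bound $\binom{|X|}{\le d}=n^{O(1)}$ is actually $O(n)$ (or at worst quasi-linear) rather than a large polynomial, so that exhaustive verification over $X$ genuinely stays within the $O(dn\log n)$ budget claimed. The key point is that $|X\setminus I|\le n^{1/d}$ forces $\binom{d+n^{1/d}}{d}\le (e(d+n^{1/d})/d)^d=O(n)$ for the regime of interest, so that with $O(d\log n)$ per evaluation the total work is $O(dn\log n)$; making this counting argument airtight (and uniform in the admissible range of $d$) is where the care is needed, whereas the separability and disjunctness facts supplied by Theorem~\ref{TH01} and the Taylor expansion are routine.
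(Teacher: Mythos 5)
Your proof is correct and takes essentially the same route as the paper's: invoke Theorem~\ref{TH01}, eliminate non-defective items via the negative tests, exhaustively search the at most $d+n^{1/d}$ surviving items using $(I,d)$-separability for uniqueness of the match, bound the search by a binomial estimate, and obtain the asymptotics from the Taylor expansion giving $\gamma_{d-1}+O(1/d)$. The only differences are cosmetic: the paper first disposes of the case $|I|<d$ (where the matrix is fully disjunct with probability $1-\delta$, so elimination alone outputs $I$) and then searches only sets of size exactly $d$, whereas you treat all $|I|\le d$ uniformly by searching sets of size at most $d$ --- a harmless simplification, since Theorem~\ref{TH01} does hold for every $|I|\le d$ --- and your $O(m)$ per-candidate evaluation cost should really be $O(dm)$ (OR of up to $d$ columns), a slip that is absorbed because $\binom{d+n^{1/d}}{d}\le (2e/d)^d n$ makes the total search cost $O(n\log n)$ anyway.
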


We first prove Theorem~\ref{TH01}

\noindent {\bf Proof of Theorem~\ref{TH01}}
\begin{proof}
In Lemma~\ref{Las} in the Appendix, we show that
$$\frac{\left(1-\frac{1}{d}\right)}{\ln \left(1-\frac{1}{d}\left( 1-\frac{1}{d}\right)^d\right)^{-1}}\ge \gamma_{d-1}.$$
Therefore $m\ge \gamma_{d-1}(\ln n+\ln (1/\delta)+O(d))$ and by Theorem~\ref{Th3}, with probability at least $1-(\delta/2)$, $M$ is $(I,d)$-separable. Note that $\delta/2$ adds a constant to the second term that is swallowed by $O(d)$.

Let w.l.o.g $I=\{1,\ldots,d'\}$ where $d'\le d$. Define for every $k\not \in I$, a random variable $X_k$ that takes the values $\{0,1\}$ and is equal to $0$ if there is a row $j$ in the test matrix such that $M_{j,i}=0$ for all $i\in I$ and $M_{j,k}=1$. That is, $M$ is $(I,d,k)$-disjunct. If so then we know that $k\not \in I$.
Let $D=\{k|X_k=1\}$. The set $D$ is the set of items that are not $(I,d,k)$-disjunct. We now find the expected size of $D$. We first have
$$\E[X_k]\le (1-p^d (1-p))^m\le \frac{\delta}{2n^{1-\frac{1}{d}}}.$$
Then
$$\E[|D|]=\E\left[\sum_{i\not\in I} X_k\right]=\sum_{i\not\in I} \E [X_k]\le (\delta/2) n^{1/d}.$$
By Markov bound
$$\Pr[|D|\ge n^{1/d}]\le \frac{\delta}{2}$$ and therefore with probability at least $1-(\delta/2)$ we have that $|D|\le n^{1/d}$.

Therefore, with probability at least $1-\delta$ the test matrix $M$ is $(I,d)$-separable and $(I,d,i)$-disjunct for at least $|[n]\backslash D|\ge n-n^{1/d}$ items.\qed
\end{proof}

Theorem~\ref{TH02} is proved in the next section.

In the table we compare the size of the $(I,d)$-separable, $(I,d)$-disjunct and $(I,d)$-semi-disjunct.

\begin{center}
\begin{tabular}{|c|c|c|c|c|}\hline
\ $d$\ & $(I,d)$-Disjunct & $(I,d)$-Separable & $(I,d)$-Semi-Disjunct\\
\hline\hline
 2 & $6.2366$& $3.4761$ & $3.7444$ \\
 \hline
 3 & $8.9722$& $6.2366$ & $6.4109$ \\
 \hline
 4 & $11.6999$& $8.9722$ & $9.1013$ \\
 \hline
 5 & $14.4241$& $11.6999$ & $11.8025$ \\
 \hline
 6 & $17.1465$ & $14.4241$ & $14.5093$ \\
 \hline
 7 & $19.8678$& $17.1465$ & $17.2193$ \\
 \hline
 \end{tabular}
\end{center}

For the $(n,d)$-semi-disjunct matrix, the constant in the table is
\begin{eqnarray*}
\frac{m}{\ln n}&=& \frac{\left(1-\frac{1}{d}\right)}{\ln \left(1-\frac{1}{d}\left( 1-\frac{1}{d}\right)^d\right)^{-1}}\\
&=&
ed-\frac{e+1}{2}-\frac{e^2-12e+2}{24e}\frac{1}{d}+O\left(\frac{1}{d^2}\right)= \gamma_{d-1}+O\left(\frac{1}{d}\right)
\end{eqnarray*}
and therefore the constant for the $(I,d)$-Separable (the third column in the table) converges to the constant of the $(I,d)$-Semi-Disjunct (the forth column in the table).

\subsection{The Algorithm}
In our algorithm, we choose $p=1-1/d$ and
$$m=\frac{\left(1-\frac{1}{d}\right)\ln n+\ln \frac{1}{\delta}+O(d)}{\ln \left(1-\frac{1}{d}\left( 1-\frac{1}{d}\right)^d\right)^{-1}}.$$

\setcounter{theorem}{4}

We now prove
\begin{theorem}
There is a linear time, $O(dn\log n)$, randomized algorithm that does (Here $o(1)=O(1/d)$)
$$m= \frac{\left(1-\frac{1}{d}\right)\ln n+\ln \frac{1}{\delta}+O(d)}{\ln \left(1-\frac{1}{d}\left( 1-\frac{1}{d}\right)^d\right)^{-1}}= (\gamma_{d-1}+o(1))\cdot {\ln n+\gamma_{d}\ln (1/\delta)}+O(\gamma_dd)$$
tests
and with probability at least $1-\delta$ finds the defective items.
\end{theorem}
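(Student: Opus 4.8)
The plan is to combine the structural guarantee of Theorem~\ref{TH01} with a two-phase decoder: a fast elimination phase that discards almost every non-defective item, and then a bounded exhaustive search over the handful of survivors whose correctness is underwritten by separability. I would instantiate the random $m\times n$ matrix $M$ with $p=1-1/d$ and the stated value of $m$; Theorem~\ref{TH01} then says that, with probability at least $1-\delta$, $M$ is $(I,d)$-semi-disjunct, i.e. simultaneously $(I,d)$-separable and $(I,d,k)$-disjunct for all but at most $n^{1/d}$ items. I condition on this event, so that the remaining argument is purely deterministic.

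For the first phase I would run the folklore elimination of Figure~\ref{Alg0}: set $X\gets[n]$ and, for each test $j$ with $T(I,M_j)=0$, delete from $X$ every $k$ with $M_{j,k}=1$. A test that contains a defective item answers $1$, so no defective item is ever deleted and $I\subseteq X$; conversely, every item $k$ for which $M$ is $(I,d,k)$-disjunct is deleted. Hence $X=I\cup D$, where $D$ lies inside the set of non-disjunct items, and the semi-disjunct property gives $|X|\le d+n^{1/d}$. A single scan of the matrix costs $O(mn)=O(dn\log n)$ time.

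For the second phase I would use separability directly: enumerate every $J\subseteq X$ with $|J|\le d$, compute $T(J,M)$, and output the unique $J$ with $T(J,M)=T(I,M)$. Because $I\subseteq X$ and $M$ is $(I,d)$-separable, $I$ is the only candidate passing this test, so correctness is immediate. The running time is the number of candidate sets times the $O(md)$ cost of evaluating one $T(J,M)$ (an OR of at most $d$ columns of length $m$). For $n$ large enough that $n^{1/d}\ge d$ we have $|X|\le 2n^{1/d}$, whence the candidate count is $\sum_{k\le d}\binom{|X|}{k}\le \poly(d)\,\binom{2n^{1/d}}{d}= \poly(d)\cdot 2^{d}n/d!$.

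The step I expect to be the crux is turning this a priori exponential enumeration into a linear-time bound. Multiplying the candidate count by the $O(md)$ per-check cost and using $m=O(d\log n)$ gives a search time of $O\!\left(\frac{2^{d}\poly(d)}{d!}\,n\log n\right)$, and the decisive quantitative fact is that $2^{d}\poly(d)/d!$ is bounded by an absolute constant (indeed it tends to $0$ as $d\to\infty$). Thus the search phase runs in $O(n\log n)$, dominated by the $O(dn\log n)$ elimination phase, for an overall linear running time $O(dn\log n)$. Finally, the closed form and the asymptotics of $m$---including the identification of the leading coefficient with $\gamma_{d-1}+o(1)$ and the $o(1)=O(1/d)$ remainder---require no new work: they are exactly the expressions furnished by Theorems~\ref{TH01} and \ref{Th3} together with the expansion (\ref{est2}). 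The one genuinely new estimate is the factorial bound $2^{d}\poly(d)/d!=O(1)$, so that is where I would concentrate the verification.
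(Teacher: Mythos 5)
Your proposal is correct, and it rests on the same two ideas as the paper's proof: run the folklore elimination of Figure~\ref{Alg0}, then exhaustively search the survivors, with semi-disjunctness (Theorem~\ref{TH01}) bounding the number of survivors and separability guaranteeing that $I$ is the unique candidate consistent with the answers. Where you genuinely depart from the paper is in the decomposition. The paper splits into two cases with two different probabilistic events: if $|I|<d$ it invokes the fact that $M$ (with $p=1-1/d$ and $m\ge\gamma_{d-1}(\ln n+\ln(1/\delta)+O(d))$) is $(I,d-1)$-disjunct, so that elimination alone returns $I$ and the algorithm halts at step 5 of Figure~\ref{Alg}; only when $|I|=d$ does it use semi-disjunctness, and its search (step 6) runs over subsets of size exactly $d$. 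You instead condition on the single event of Theorem~\ref{TH01} — which is indeed stated and proved for every $|I|=d'\le d$, not just $|I|=d$ — and enumerate all $J\subseteq X$ with $|J|\le d$, so separability identifies $I$ regardless of its size. This buys a unified argument with one failure event and no case analysis, at the price of a slightly larger (but harmless) enumeration; the paper's split buys early termination when $|I|<d$. Your counting, $\binom{2n^{1/d}}{d}\le 2^dn/d!$ followed by $2^d\poly(d)/d!=O(1)$, plays exactly the role of the paper's estimate $\binom{n^{1/d}}{d}dm\le e\left(e/d\right)^{d-1}dn\log n=O(dn\log n)$, so the "crux" you single out is real but easy, just as in the paper. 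Finally, your explicit caveat $n^{1/d}\ge d$ is precisely the large-$n$ regime the paper assumes implicitly when it writes $\binom{n^{1/d}}{d}$ and drops the additive $d$ from $|X|\le d+n^{1/d}$; neither your argument nor the paper's covers $n<d^d$, so this is not a gap relative to the paper's own level of rigor.
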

\begin{proof}
Consider the random $m\times n$ test matrix $M$ with entries $0$ with probability $p=1-1/d$ and $1$ with probability $1/d$. Let $I$ be set of defective items. If $|I|< d$ then since $m\ge \gamma_{d-1} (\ln n+\ln(1/\delta)+O(d))$, by Theorem~\ref{Th3}, with probability at least $1-\delta$ the test matrix $M$ is $(I,d-1)$-disjunct test matrix. Therefore, if the number of defective items is less than $d$ then with probability at least $1-\delta$ the algorithm in Section \ref{S1} finds the defective items. Therefore we may assume that the number of defective items is exactly $d$.

Now, if the number of defective items is $d$ then, by Theorem~\ref{TH01}, with probability at least $1-\delta$, the test matrix $M$ is $(I,d)$-semi-disjunct. Therefore by the algorithm in Section~\ref{S1} all the non-defective items are eliminated except at most $n^{1/d}$ items. Since $M$ is also $(n,I)$-separable matrix the only set $J$ that satisfies $T(J,M)=T(I,M)$ is $I$. So we can exhaustively search for the $d$ defective items in $n^{1/d}$ items. This takes time
$${n^{1/d}\choose d}dm\le \left(\frac{en^{1/d}}{d}\right)^dd^2\log n=e\left(\frac{e}{d}\right)^{d-1}dn\log n=O(dn\log n)$$ and therefore linear time.\qed
\end{proof}

The algorithm is in Figure~\ref{Alg}. In the algorithm, $M^{(i)}$ is the $i$th column of $M$. ANSWER$_i$ is $0$ if the test is negative and $1$ if it is positive. The correctness of the algorithm follows from the above analysis.

\begin{figure}[h!]
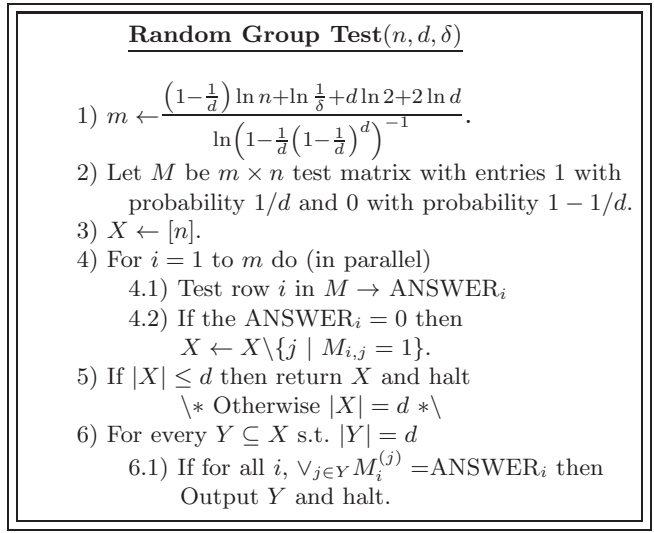

  \begin{center}
   \fbox{\fbox{\begin{minipage}{28em}
  \begin{tabbing}
  xxxx\=xxxx\=xxxx\=xxxx\= \kill
  \>\> \underline{{\bf Random Group Test$(n,d,\delta)$}}\\ \\
  \> 1) $m\gets${\large $ \frac{\left(1-\frac{1}{d}\right)\ln n+\ln \frac{1}{\delta}+d\ln 2+2\ln d}{\ln \left(1-\frac{1}{d}\left( 1-\frac{1}{d}\right)^d\right)^{-1}}.$}  \\
  \> 2) Let $M$ be $m\times n$ test matrix with entries $1$ with\\
  \> \> probability $1/d$ and $0$ with probability $1-1/d$.\\
  \> 3) $X\gets [n]$.\\
  \> 4) For $i=1$ to $m$ do (in parallel)\\
  \>\> 4.1) Test row $i$ in $M\to$ ANSWER$_i$\\
  \>\> 4.2) If the ANSWER$_i=0$ then\\
  \>\>\> $X\gets X\backslash \{j\ |\ M_{i,j}=1\}$.\\
  \> 5) If $|X|\le d$ then return $X$ and halt\\
  \> \>\> $\backslash*$ Otherwise $|X|=d$ $*\backslash$ \\
  \> 6) For every $Y\subseteq X$ s.t. $|Y|=d$\\
  \>\>  6.1) If for all $i$, $\vee_{j\in Y} M^{(j)}_i=$ANSWER$_i$ then\\
  \>\> \> Output $Y$ and halt.
  \end{tabbing}\end{minipage}}}
  \end{center}
	\caption{An algorithm using semi-disjunct test matrix.}
	\label{Alg}
	\end{figure}

\section{Open Problems}
In this section we give some open problems

\begin{enumerate}
\item Our analysis for $(I,d)$-separable matrices gives the bound
\begin{eqnarray}\lim_{n\to\infty} \frac{m_{\RID}}{\ln n}\le \gamma_{d-1} :=ed-\frac{e+1}{2}+O\left(\frac{1}{d}\right)\end{eqnarray}
We still do not know whether this bound is tight in the RID model.
Also, no lower bound is known except the information theoretic lower bound $d\log n$.

\item We show in Section~\ref{S1} that the best probability for the disjunctness property is $p=1-1/(d+1)$. This is based on maximizing the expected number of elements that are eliminated for each test. This is not necessarily the best probability for obtaining a minimum number of tests. Find the probability that gives the minimum number of tests. Find a lower bound for the number of tests.

\item In Section 5 we show that the RrSD model converges to the RID model. Consider the most general strong non-adaptive model (RSN model) where each row in $M$ is chosen randomly independently according to some distribution $D$ over $\{0,1\}^n$. Show that the algorithm that do the minimum number of tests is the RsSD algorithm with $r=n/d$.
\end{enumerate}

\newpage
\section*{Appendix}
\setcounter{equation}{0}

{\bf Proof of Lemma~\ref{lem}}

We need to prove
\begin{eqnarray*}
\Pi&=&\max_{d_1,d_2,k} P(n,d_1,d_2,k,p)\\
&&\ \ \ \ \ \ =\max_{0\le w\le d-1}\max\{(P(n,d,d,w,p), P(n,w,d,w,p)\}.
\end{eqnarray*}
\begin{proof} If $1-2p^{d_2-k}\ge 0$ then $d_2-k\ge 1$ and
\begin{eqnarray*}
P(n,d_1,d_2,k,p)&=& n^{d_2-k} \left(1-p^{d_2}-p^{d_1}+2p^{d_1+d_2-k}\right)^m\\
&=& n^{d_2-k} \left(1-p^{d_2}-p^{d_1}(1-2p^{d_2-k})\right)^m\\
&\le & n^{d_2-k} \left(1-p^{d_2}-p^{d}(1-2p^{d_2-k})\right)^m\\
&\le & n^{d_2-k} \left(1-p^{d}-p^{d}(1-2p^{d_2-k})\right)^m\\
&= & n^{d_2-k} \left(1-2p^{d}+2p^{d+d_2-k})\right)^m\\
&=& P(n,d,d,d-(d_2-k),p)\\
&\le& \max_{0\le w\le d-1}P(n,d,d,w,p).
\end{eqnarray*}
If $1-2p^{d_2-k}< 0$ and $d_2>k$ then
\begin{eqnarray*}
P(n,d_1,d_2,k,p)&=& n^{d_2-k} \left(1-p^{d_2}-p^{d_1}+2p^{d_1+d_2-k}\right)^m\\
&=& n^{d_2-k} \left(1-p^{d_2}-p^{d_1}(1-2p^{d_2-k})\right)^m\\
&\le & n^{d_2-k} \left(1-p^{d_2}-p^{k}(1-2p^{d_2-k})\right)^m\\
&= & n^{d_2-k} \left(1+p^{d_2}-p^{k}\right)^m\\
&= & n^{d_2-k} \left(1+p^k(p^{d_2-k}-1)\right)^m\\
&\le & n^{d_2-k} \left(1+p^{k+d-d_2}(p^{d_2-k}-1)\right)^m\\
&= & n^{d_2-k} \left(1+p^{d}-p^{d-(d_2-k)}\right)^m\\
&=& P(n,d-(d_2-k),d,d-(d_2-k),p)\\
&\le& \max_{0\le w\le d-1}P(n,w,d,w,p).
\end{eqnarray*}
If $d_2=k$ then $d_1>d_2$ (otherwise $J=I$) and
\begin{eqnarray*}
P(n,d_1,d_2,k,p)&=&\left(1-p^{d_2}+p^{d_1}\right)^m\\
&\le& (1-p^{d_1-1}+p^{d_1})^m\\
&=& (1-p^{d_1-1}(1-p))^m\\
&\le& (1-p^{d-1}(1-p))^m\\
&=&P(n,d,d-1,d-1,p)\\
&\le& P(n,d-1,d,d-1,p)\\
&\le& \max_{0\le w\le d-1}P(n,w,d,w,p).\qed
\end{eqnarray*}
\end{proof}

The following are used to estimate some of the expressions in the paper

\begin{lemma}\label{est}
\begin{eqnarray}\label{EQ1} \left(1-\frac{1}{d}\right)^d=\frac{1}{e}-\frac{1}{2e}
\frac{1}{d}-\frac{5}{24e}\frac{1}{d^2}+O\left(\frac{1}{d^3}\right)
\end{eqnarray}
\begin{eqnarray}\label{EQ2}\left(1-\frac{1}{d+1}\right)^{d+1}=\frac{1}{e}-\frac{1}{2e}
\frac{1}{d}+\frac{7}{24e}\frac{1}{d^2}+O\left(\frac{1}{d^3}\right)\mbox{\ and}\nonumber\\
 \left(1-\frac{1}{d+1}\right)^{-(d+1)}=e+\frac{e}{2}
\frac{1}{d}+\frac{e}{24}\frac{1}{d^2}+O\left(\frac{1}{d^3}\right).
\end{eqnarray}
\begin{eqnarray}\label{EQ3}
\frac{1}{d+1}=\frac{1}{d}-\frac{1}{d^2}+\frac{1}{d^3}-\cdots \ \ \mbox{and}\ \  \frac{1}{d-1}=\frac{1}{d}+\frac{1}{d^2}+\frac{1}{d^3}+\cdots
\end{eqnarray}
\begin{eqnarray}\label{EQ4}\ln(1-x)^{-1}=x+\frac{1}{2}x^2+\frac{1}{3}x^3+O(x^4).\end{eqnarray}
\begin{eqnarray}\label{EQ5}\frac{1}{\ln(1-x)^{-1}}=\frac{1}{x}-\frac{1}{2}-\frac{1}{12}x-\frac{1}{24}x^2-\frac{19}{720}x^3+O(x^4).\end{eqnarray}
\begin{eqnarray}\label{EQ6}\frac{1}{A+Bx+Cx^2+O(x^3)}=\frac{1}{A}-\frac{B}{A^2}x+\frac{B^2-CA}{A^3}x^2+O(x^3).\end{eqnarray}
\begin{eqnarray}\label{EQ7}
e^{A+Bx+Cx^2+O(x^3)}=e^A\left(1+Bx+\left(\frac{B^2}{2}+C\right)x^2+O(x^3)\right).
\end{eqnarray}
\end{lemma}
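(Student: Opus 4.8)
The plan is to treat the seven identities of Lemma~\ref{est} as a small toolbox of power-series manipulations and to prove them in an order in which each later expansion is assembled from earlier ones. I would begin with the two purely elementary facts. Identity~(\ref{EQ3}) is the geometric series: writing $\tfrac{1}{d\pm 1}=\tfrac1d\cdot\tfrac{1}{1\pm 1/d}$ and expanding $\tfrac{1}{1\pm u}=1\mp u+u^2\mp\cdots$ at $u=1/d$ yields both series. Identity~(\ref{EQ4}) is the Mercator series $\ln(1-x)^{-1}=\sum_{k\ge 1}x^k/k$, kept through the cubic term with the tail absorbed into $O(x^4)$.

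Next I would establish the two generic ``operation'' expansions~(\ref{EQ6}) and~(\ref{EQ7}), since the three remaining identities are obtained by applying them. For~(\ref{EQ6}) I would factor out $A$ and expand $\tfrac{1}{1+u}=1-u+u^2-\cdots$ with $u=\tfrac BA x+\tfrac CA x^2+O(x^3)$, keeping terms through $x^2$: the constant term is $1/A$, the linear coefficient is $-B/A^2$, and the quadratic coefficient collects $-C/A^2$ from $-u$ together with $(B/A)^2\cdot\tfrac1A$ from $+u^2$, giving $(B^2-CA)/A^3$. For~(\ref{EQ7}) I would write $e^{A+Bx+Cx^2+\cdots}=e^A\,e^{Bx+Cx^2+\cdots}$ and expand the second factor as $1+(Bx+Cx^2)+\tfrac12(Bx+Cx^2)^2+O(x^3)=1+Bx+(C+B^2/2)x^2+O(x^3)$.

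The remaining three identities are then compositions of these. Identity~(\ref{EQ5}) is the reciprocal of~(\ref{EQ4}): since $\ln(1-x)^{-1}=x\,(1+x/2+x^2/3+x^3/4+O(x^4))$, I would invert the bracketed factor (a version of~(\ref{EQ6}) carried one order higher, to $x^4$, so that after multiplying by $1/x$ the cubic coefficient $-19/720$ survives) and multiply by $1/x$. For~(\ref{EQ1}) I would write $(1-1/d)^d=\exp\!\big(d\ln(1-1/d)\big)$, use~(\ref{EQ4}) at $x=1/d$ to get $d\ln(1-1/d)=-1-\tfrac{1}{2d}-\tfrac{1}{3d^2}+O(1/d^3)$, and feed this into~(\ref{EQ7}) with $A=-1$, $B=-1/2$, $C=-1/3$; the quadratic coefficient $C+B^2/2=-\tfrac13+\tfrac18=-\tfrac{5}{24}$ reproduces the stated $-5/(24e)$. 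Finally~(\ref{EQ2}) follows by substituting $d\mapsto d+1$ in~(\ref{EQ1}) and re-expanding $\tfrac{1}{d+1}$ and $\tfrac{1}{(d+1)^2}$ in powers of $1/d$ via~(\ref{EQ3}), with the reciprocal $(1-1/(d+1))^{-(d+1)}$ obtained by one further application of~(\ref{EQ6}).

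I do not expect a genuine obstacle: every step is a finite, mechanical power-series computation. The only real care is bookkeeping of the error terms under composition---ensuring that an $O(x^3)$ input to~(\ref{EQ7}) or~(\ref{EQ6}) produces an output whose error is still $O(x^3)$, and that the index shift $d\mapsto d+1$ followed by re-expansion in $1/d$ does not silently drop a contribution to the $1/d^2$ coefficient. The safeguard I would use throughout is to carry every intermediate series one order beyond what is finally reported and to truncate only at the very last step.
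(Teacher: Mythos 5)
The paper states Lemma~\ref{est} in the Appendix with no proof at all, so there is no paper argument to compare against; your route---the geometric and Mercator series first, then the generic inversion rule (\ref{EQ6}) and exponentiation rule (\ref{EQ7}), then assembling (\ref{EQ5}), (\ref{EQ1}) and (\ref{EQ2}) by composition---is the natural one, and every coefficient you computed explicitly is right: $C+B^2/2=-\tfrac13+\tfrac18=-\tfrac{5}{24}$ for (\ref{EQ1}), the quadratic coefficient $(B^2-CA)/A^3$ in (\ref{EQ6}), and carrying the inversion in (\ref{EQ5}) one order further indeed produces $-\tfrac{19}{720}$.

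There is, however, one genuine problem, and it lies not in your method but in what your method will produce at the final step. Executing the promised ``one further application of (\ref{EQ6})'' on the first half of (\ref{EQ2}), i.e.\ inverting $\tfrac1e-\tfrac{1}{2e}\tfrac1d+\tfrac{7}{24e}\tfrac{1}{d^2}+O(1/d^3)$ with $A=1/e$, $B=-1/(2e)$, $C=7/(24e)$, gives
$$\frac{B^2-CA}{A^3}=e\left(\frac14-\frac{7}{24}\right)=-\frac{e}{24},
\qquad\mbox{hence}\qquad
\left(1-\frac{1}{d+1}\right)^{-(d+1)}=e+\frac{e}{2}\frac1d-\frac{e}{24}\frac{1}{d^2}+O\left(\frac{1}{d^3}\right),$$
with a \emph{minus} sign, not the plus sign printed in the lemma. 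A direct check confirms this: $(1-1/(d+1))^{-(d+1)}=(1+1/d)^{d+1}=\exp\left(1+\tfrac{1}{2d}-\tfrac{1}{6d^2}+O(1/d^3)\right)$, and (\ref{EQ7}) with $B=\tfrac12$, $C=-\tfrac16$ gives the $1/d^2$ coefficient $e(\tfrac18-\tfrac16)=-\tfrac{e}{24}$; numerically at $d=10$, $(11/10)^{11}\approx 2.85312$, while the printed expansion gives $\approx 2.85533$ and the corrected one $\approx 2.85306$. So the second identity of (\ref{EQ2}) as stated is false (a sign typo), and your proposal, which asserts it will reproduce the stated coefficients, cannot terminate where the paper's statement terminates. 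Note also that the paper's own downstream use is consistent with the corrected sign: feeding $-\tfrac{e}{24}$ into the derivation of (\ref{est1}) yields the stated coefficient $-\tfrac{e^2+2}{24e}$, whereas $+\tfrac{e}{24}$ would yield $+\tfrac{e^2-2}{24e}$. When you write the proof up, prove the corrected identity and flag the typo rather than forcing the printed one.
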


The following lemma will be frequently used in the sequel
\begin{lemma}\label{main}
Let $0<z_1<z_2<1$ then
$$\frac{\ln(1-z_1)^{-1}}{\ln(1-z_2)^{-1}}\le \frac{z_1}{z_2}$$
and
$$\frac{\ln(1-z_2)^{-1}}{\ln(1-z_1)^{-1}}\le \frac{z_2}{z_1}.$$
\end{lemma}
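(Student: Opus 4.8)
\medskip
\noindent\textbf{Proof proposal.}
The plan is to reduce both displayed inequalities to a single monotonicity fact about the auxiliary function
$$h(z):=\frac{\ln(1-z)^{-1}}{z}=\frac{-\ln(1-z)}{z},\qquad 0<z<1.$$
Cross-multiplying the (positive) quantities in the first inequality turns it into $z_2\ln(1-z_1)^{-1}\le z_1\ln(1-z_2)^{-1}$, that is, into $h(z_1)\le h(z_2)$; the second displayed inequality is the reciprocal form of this same relation. Hence it suffices to prove that $h$ is nondecreasing on $(0,1)$, after which the conclusion for $0<z_1<z_2<1$ is immediate.

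To prove the monotonicity of $h$ I would use the power-series expansion $\ln(1-z)^{-1}=-\ln(1-z)=\sum_{k\ge 1}z^{k}/k$, which converges for $0\le z<1$. Dividing by $z$ gives
$$h(z)=\sum_{k\ge 1}\frac{z^{k-1}}{k}=1+\frac{z}{2}+\frac{z^{2}}{3}+\frac{z^{3}}{4}+\cdots,$$
a power series with strictly positive coefficients and radius of convergence $1$. Every summand is nondecreasing in $z$ on $(0,1)$, so $h$ itself is strictly increasing there; in particular $h(z_1)<h(z_2)$ whenever $0<z_1<z_2<1$, which is exactly what we need.

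If one prefers to avoid series, there are two equally short alternatives. First, $g(z):=-\ln(1-z)$ satisfies $g(0)=0$ and $g''(z)=(1-z)^{-2}>0$, so $g$ is convex; for a convex function vanishing at the origin the secant slope $g(z)/z=h(z)$ from the origin is increasing, which is again the claim. Second, one can differentiate directly: $h'(z)=\bigl(z/(1-z)+\ln(1-z)\bigr)/z^{2}$, and writing the bracketed numerator in terms of $u=1-z$ as $1/u-1+\ln u$, it vanishes at $u=1$ and has $u$-derivative $(u-1)/u^{2}\le 0$, hence stays nonnegative on $(0,1)$; thus $h'\ge 0$.

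I do not expect a genuine obstacle, since the statement is elementary. The only point requiring care is the bookkeeping: one should verify that the two displayed ratios are precisely the direct and the reciprocal rearrangement of the single relation $h(z_1)\le h(z_2)$, so that establishing the monotonicity of $h$ settles both at once. Of the three routes above the power-series argument is the most transparent, so that is the one I would write out in full.
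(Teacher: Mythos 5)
Your proof of the first inequality is correct and is essentially the paper's own argument: the paper expands $\ln(1-z)^{-1}=\sum_{i\ge 1}z^i/i$ and bounds the ratio of the two series termwise by $z_1/z_2$, which is exactly your statement that $h(z)=\ln(1-z)^{-1}/z$ is nondecreasing on $(0,1)$. Your convexity and derivative variants are fine alternatives to the same fact.

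However, your treatment of the second inequality contains a genuine error, and it points at a typo in the paper itself. You write that the second displayed inequality ``is the reciprocal form'' of $h(z_1)\le h(z_2)$. It is not: taking reciprocals of positive quantities reverses the inequality, so $h(z_1)\le h(z_2)$ yields
$$\frac{\ln(1-z_2)^{-1}}{\ln(1-z_1)^{-1}}\;\ge\;\frac{z_2}{z_1},$$
with ``$\ge$'', not ``$\le$''. The second inequality as printed is in fact false: for $z_1=0.1$, $z_2=0.9$ the left-hand side is $\ln(10)/\ln(10/9)\approx 21.9$ while the right-hand side is $9$. (Indeed, if both printed inequalities held simultaneously they would force equality $\ln(1-z_1)^{-1}/\ln(1-z_2)^{-1}=z_1/z_2$ for all $z_1<z_2$, which is absurd.) The version the paper actually invokes in Lemmas~\ref{Ap01}, \ref{App02} and \ref{Las} is the ``$\ge$'' form, i.e., the true reciprocal of the first inequality, and the paper's own proof accordingly establishes only the first inequality. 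So your monotonicity argument does contain all the mathematical content needed --- $h$ nondecreasing gives both the first inequality and the corrected second one --- but as written you assert that a false statement follows by reciprocation from a true one. You should have flagged the sign, stated the second inequality with ``$\ge$'', and derived that form from $h(z_1)\le h(z_2)$.
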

\begin{proof}
By (\ref{EQ4}) in Lemma~\ref{est} we have
\begin{eqnarray*}
\frac{\ln(1-z_1)^{-1}}{\ln(1-z_2)^{-1}}=\frac{z_1}{z_2}\frac{\sum_{i=1}^\infty \frac{1}{i} z_1^{i-1}}{\sum_{i=1}^\infty \frac{1}{i} z_2^{i-1}}\le \frac{z_1}{z_2}.\qed
\end{eqnarray*}
\end{proof}

\begin{lemma}\label{Ap01}
  We have $T_1(d,w,p^*)\le T_1(d,d-1,p^*).$
\end{lemma}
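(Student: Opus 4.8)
The plan is to reduce the claimed inequality to an elementary comparison of a monomial with a geometric sum, using Lemma~\ref{main} as the bridge. Write $p=p^*=1-1/d$ and set $z_w:=p^w-p^d$, so that $1+p^d-p^w=1-z_w$ and hence
$$T_1(d,w,p^*)=\frac{d-w}{\ln(1-z_w)^{-1}}.$$
Since $w\le d-1$ and $0<p<1$ we have $p^w\ge p^{d-1}$, and therefore $z_w\ge z_{d-1}$, with equality exactly when $w=d-1$. In that boundary case the inequality to be proved is an equality, so there is nothing to do and I may assume $w<d-1$.

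For $w<d-1$ we have $0<z_{d-1}<z_w<1$, so Lemma~\ref{main} applied with $z_1=z_{d-1}$ and $z_2=z_w$ gives
$$\ln(1-z_{d-1})^{-1}\le \frac{z_{d-1}}{z_w}\,\ln(1-z_w)^{-1}.$$
The target inequality $T_1(d,w,p^*)\le T_1(d,d-1,p^*)$ is, after clearing the (positive) denominators, exactly $(d-w)\ln(1-z_{d-1})^{-1}\le \ln(1-z_w)^{-1}$. Plugging the displayed bound into the left-hand side, I see it suffices to establish the purely algebraic statement $(d-w)\,z_{d-1}\le z_w$.

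To finish, I would substitute $z_{d-1}=p^{d-1}(1-p)$ and $z_w=p^{w}(1-p^{d-w})$, write $t:=d-w\in\{1,\ldots,d\}$, divide through by $p^{w}(1-p)>0$, and use the identity $1-p^{t}=(1-p)\sum_{i=0}^{t-1}p^{i}$. The inequality $(d-w)z_{d-1}\le z_w$ then collapses to
$$t\,p^{t-1}\le \sum_{i=0}^{t-1}p^{i},$$
which holds because $p^{t-1}$ is the smallest of the $t$ terms $p^{0},\ldots,p^{t-1}$ when $0<p<1$. The specific value $p^*=1-1/d$ enters only through $1-p=1/d$, which is what makes $z_{d-1}=p^{d-1}/d$ and lets the factor $(d-w)=t$ cancel cleanly against the geometric sum.

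The main obstacle is arranging the reduction so that Lemma~\ref{main} is invoked in the correct direction: one must compare $z_w$ against $z_{d-1}$ and then recognize that the residual factor $(d-w)\,z_{d-1}/z_w$ is governed by the elementary monomial-versus-geometric-sum bound. Once that last inequality is isolated, every remaining step is routine algebra.
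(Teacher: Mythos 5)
Your proof is correct and follows essentially the same route as the paper: both invoke Lemma~\ref{main} in the same direction to replace the ratio of logarithms by the ratio $z_{d-1}/z_{w}$, reducing the claim to the algebraic inequality $(d-w)\left(p^{d-1}-p^{d}\right)\le p^{w}-p^{d}$ at $p=1-1/d$. The paper closes this step with Bernoulli's inequality $\left(1+\tfrac{1}{d-1}\right)^{d-w}\ge 1+\tfrac{d-w}{d-1}$, while you use the geometric-sum bound $t\,p^{t-1}\le\sum_{i=0}^{t-1}p^{i}$; these are the same elementary fact in different clothing, so no substantive difference remains.
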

\begin{proof}
By Lemma~\ref{main}, we have
  \begin{eqnarray*}
  (d-w)\frac{T_1(d,d-1,p^*)}{T_1(d,w,p^*)}=\frac{\ln\left( 1+\left(\frac{d-1}{d}\right)^d-\left(\frac{d-1}{d}\right)^w\right)^{-1}}
  {\ln\left( 1+\left(\frac{d-1}{d}\right)^d-\left(\frac{d-1}{d}\right)^{d-1}\right)^{-1}}
  \end{eqnarray*}
  \begin{eqnarray*}
  &\ge& \frac{\left(\frac{d-1}{d}\right)^w-\left(\frac{d-1}{d}\right)^d} {\left(\frac{d-1}{d}\right)^{d-1}-\left(\frac{d-1}{d}\right)^{d}}
  =(d-1)\left(\left(1+\frac{1}{d-1}\right)^{d-w}-1\right)\\
  &\ge&(d-1)\left(\left(1+\frac{d-w}{d-1}\right)-1\right)=d-w.\qed
  \end{eqnarray*}
\end{proof}

\begin{lemma}\label{App02} We have
$
T_2(d,w,p^*)\le T_1(d,d-1,p^*)
$.
\end{lemma}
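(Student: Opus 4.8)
The plan is to mirror the proof of Lemma~\ref{Ap01}: rewrite the claim $T_2(d,w,p^*)\le T_1(d,d-1,p^*)$ as a statement about a ratio of logarithms and reduce it, via Lemma~\ref{main}, to an elementary inequality in $d$ and $w$. First I would record the two relevant ``gaps''. Writing $p^*=1-1/d$ and $t=d-w$, the argument of the logarithm in $T_1(d,d-1,p^*)$ is $1-z_1$ with $z_1=p^{*(d-1)}-p^{*d}=p^{*(d-1)}/d$, while the argument in $T_2(d,w,p^*)$ is $1-z_2$ with $z_2=2p^{*d}(1-p^{*t})$. Since $z_2/z_1=2(d-1)(1-p^{*t})\ge 2(d-1)/d\ge 1$, we always have $0<z_1\le z_2<1$. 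Because $T_1(d,d-1,p^*)=1/\ln(1-z_1)^{-1}$ and $T_2(d,w,p^*)=t/\ln(1-z_2)^{-1}$, the desired inequality is equivalent to $\ln(1-z_2)^{-1}\ge t\,\ln(1-z_1)^{-1}$, i.e.\ to $1-z_2\le(1-z_1)^{t}$.

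The next step is to apply Lemma~\ref{main} in the form $\ln(1-z_2)^{-1}/\ln(1-z_1)^{-1}\ge z_2/z_1$ for $z_1\le z_2$ (the reciprocal of the inequality actually established there), which reduces the claim to the purely algebraic statement $z_2/z_1=2(d-1)(1-p^{*t})\ge t$. Expanding $1-p^{*t}=1-(1-1/d)^t$ with the two--term bound $(1-1/d)^t\le 1-t/d+\binom{t}{2}/d^2$ (valid for $t\le d$ since $1/d$ is small), one finds that $2(d-1)(1-p^{*t})\ge t$ reduces to $(d-1)(2d-t+1)\ge d^2$, which holds for every $t\le d-1$, i.e.\ for all $w\ge 1$. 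Combined with Lemma~\ref{Ap01} and the earlier analysis via (\ref{EQ4}), this disposes of all cases except $w=0$.

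The hard part is the single endpoint $w=0$, i.e.\ $t=d$, where $z_2/z_1=2(d-1)(1-p^{*d})$ is only barely above $d$ and the crude bound $z_2/z_1\ge d$ (which reduces to $p^{*d}\le(d-2)/(2(d-1))$) is too weak for the smallest $d$. Here I would not pass through Lemma~\ref{main} but estimate both sides directly. Since $z_1=p^{*d}/(d-1)=O(1/d)$ is small, (\ref{EQ4}) gives $d\,\ln(1-z_1)^{-1}=dz_1+O(dz_1^2)=p^{*d}+O(1/d)$, whereas $\ln(1-z_2)^{-1}\ge z_2=2p^{*d}(1-p^{*d})$. As $p^{*d}=(1-1/d)^d<1/e<1/2$, one gets $2p^{*d}(1-p^{*d})>p^{*d}$ with a fixed positive margin that absorbs the $O(1/d)$ error, settling all sufficiently large $d$; the finitely many remaining small $d$ I would check by evaluating $1-z_2$ and $(1-z_1)^{d}$ at their explicit rational values.

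The crux of the whole argument is thus precisely this $w=0$ corner. Unlike in Lemma~\ref{Ap01}, the first--order ratio bound from Lemma~\ref{main} is \emph{not} strong enough there, and one must exploit that $z_2$ is bounded away from $0$ so that $\ln(1-z_2)^{-1}$ strictly exceeds its linearization $z_2$. I expect this to be the only genuinely delicate step, and the place where the estimate is tightest, so I would organize the write-up to isolate it and handle the handful of small--$d$ values explicitly rather than chase a single uniform closed form.
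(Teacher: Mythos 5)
Your handling of $w\ge 1$ is correct and is essentially the paper's own proof. The reduction through Lemma~\ref{main} turns the claim into $z_2/z_1=2(d-1)\bigl(1-(1-1/d)^{d-w}\bigr)\ge d-w$, which is exactly the paper's inequality (\ref{local}); the paper proves that inequality (only for $w\in[1,d-1]$) via monotonicity of $(1-x/z)^{1/x}$, while you prove it via the truncated binomial bound $(1-1/d)^t\le 1-t/d+\binom{t}{2}/d^2$ and the resulting condition $(d-1)(2d-t+1)\ge d^2$ for $t=d-w\le d-1$. The two verifications are interchangeable, so on this range you and the paper coincide.

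The divergence is at $w=0$, and there your proposal is more careful than the paper but runs into a wall that is not of your making. The paper's proof simply does not cover $w=0$: inequality (\ref{local}) is asserted only for $w\in[1,d-1]$, and the endpoint is silently dropped, which is a genuine gap in the paper. You correctly observe that the Lemma~\ref{main} reduction is too lossy there for small $d$, and your direct comparison, $d\ln(1-z_1)^{-1}=p^{*d}+O(1/d)$ against $\ln(1-z_2)^{-1}\ge z_2=2p^{*d}(1-p^{*d})\ge 2(1-1/e)\,p^{*d}$, settles all large $d$; in fact a direct evaluation confirms $1-z_2\le(1-z_1)^d$ for every $d\ge 3$. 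But your final step, checking ``the finitely many remaining small $d$'' explicitly, cannot succeed at $d=2$: there $p^*=1/2$, $z_1=1/4$, $z_2=3/8$, and $(1-z_1)^2=9/16<5/8=1-z_2$; equivalently $T_2(2,0,1/2)=2/\ln(8/5)\approx 4.255$ exceeds $T_1(2,1,1/2)=1/\ln(4/3)\approx 3.476$. So the lemma as stated is false at $(d,w)=(2,0)$ and holds only under the restriction $d\ge 3$ (or $w\ge 1$). The delicate corner you isolated is therefore real, but it is a defect of the statement itself --- one that also infects the paper's claim $\min_p T=T_1(d,d-1,p^*)=\gamma_{d-1}$ and its table entry for $d=2$ --- and no finite check, yours or anyone's, can close it.
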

\begin{proof} Since $f(x)=(1-x/z)^{1/x}$ is monotonically decreasing function for $x\in [0,z]$ we have, for $w\in [1,d-1]$,
$$\left(1-\frac{d-w}{2(d-1)}\right)^{\frac{1}{d-w}}\ge 1-\frac{1}{2(d-1)}\ge 1-\frac{1}{d}.$$
This is equivalent to
\begin{eqnarray}\label{local}
2\left(1-\left(1-\frac{1}{d}\right)^{d-w}\right)\ge \frac{d-w}{d-1}.
\end{eqnarray}
Now by (\ref{local}) and Lemma~\ref{main} we have
$$(d-w)\frac{T_1(d,d-1,p^*)}{T_2(d,w,p^*)}=\frac{\ln\left(1-2\left(\frac{d-1}{d}\right)^{d} +2\left(\frac{d-1}{d}\right)^{2d-w}\right)^{-1}}
{\ln\left(1-\frac{1}{d-1}\left(1-\frac{1}{d}\right)^d\right)^{-1}} $$
\begin{eqnarray*}
&=& \frac{\ln\left(1-2 \left(1-\left(1-\frac{1}{d}\right)^{d-w}\right)\left(1-\frac{1}{d}\right)^d\right)^{-1}}
{\ln\left(1-\frac{1}{d-1}\left(1-\frac{1}{d}\right)^d\right)^{-1}}\\
&\ge& \frac{\ln\left(1-\frac{d-w}{d-1}\left(1-\frac{1}{d}\right)^d\right)^{-1}}
{\ln\left(1-\frac{1}{d-1}\left(1-\frac{1}{d}\right)^d\right)^{-1}}\ge d-w\\
\end{eqnarray*}
\end{proof}

\begin{lemma}\label{Las}
We have
$$\frac{\left(1-\frac{1}{d}\right)}{\ln \left(1-\frac{1}{d}\left( 1-\frac{1}{d}\right)^d\right)^{-1}}\ge \gamma_{d-1}.$$
\end{lemma}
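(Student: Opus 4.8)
The plan is to recognize both sides of the claimed inequality as quantities of the shape $(\text{constant})/\ln(1-z)^{-1}$ and then to apply Lemma~\ref{main} in a single step. First I would name the two arguments that appear,
$$z_1 = \frac{1}{d}\left(1-\frac{1}{d}\right)^d, \qquad z_2 = \frac{1}{d-1}\left(1-\frac{1}{d}\right)^d,$$
so that the denominator on the left-hand side of the statement is exactly $\ln(1-z_1)^{-1}$, and by the definition of $\gamma_{d-1}$ recalled in Theorem~\ref{Th3} we have $\gamma_{d-1} = 1/\ln(1-z_2)^{-1}$. The key algebraic observation is the identity
$$z_2 = \frac{1}{d-1}\left(1-\frac{1}{d}\right)^d = \frac{1}{d}\left(1-\frac{1}{d}\right)^{d-1},$$
which follows by pulling one factor $\left(1-\frac{1}{d}\right) = \frac{d-1}{d}$ out of the power. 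From this form it is immediate that $0 < z_1 < z_2 < 1$ and, crucially, that their ratio is
$$\frac{z_1}{z_2} = \frac{\left(1-\frac{1}{d}\right)^d}{\left(1-\frac{1}{d}\right)^{d-1}} = 1-\frac{1}{d}.$$

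Next I would reduce the target inequality to a single ratio of logarithms. Since $\ln(1-z_1)^{-1}$ and $\ln(1-z_2)^{-1}$ are both strictly positive for arguments in $(0,1)$, the claim
$$\frac{1-\frac{1}{d}}{\ln(1-z_1)^{-1}} \ge \frac{1}{\ln(1-z_2)^{-1}}$$
is equivalent, after cross-multiplying and dividing by $\ln(1-z_2)^{-1}$, to the clean statement
$$\frac{\ln(1-z_1)^{-1}}{\ln(1-z_2)^{-1}} \le 1-\frac{1}{d}.$$

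Finally I would invoke Lemma~\ref{main}: with $0<z_1<z_2<1$ its first inequality gives $\ln(1-z_1)^{-1}/\ln(1-z_2)^{-1} \le z_1/z_2$, and by the ratio computed above $z_1/z_2 = 1-\frac{1}{d}$, which is precisely the reduced inequality. I do not expect a genuine obstacle here: once the identity rewriting $z_2$ is spotted, the argument is a one-line application of Lemma~\ref{main}. The only point warranting care is the \emph{direction} of the bound — one must match the reduced inequality to the first (not the reciprocal) inequality of Lemma~\ref{main}, so I would verify that the factor $1-\frac{1}{d}$ arising from $z_1/z_2$ aligns with the $\left(1-\frac{1}{d}\right)$ in the numerator of the original statement, as it does.
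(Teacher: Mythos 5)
Your proof is correct and is essentially the paper's own argument: the paper applies Lemma~\ref{main} to the same two quantities $z_1=\frac{1}{d}\left(1-\frac{1}{d}\right)^d$ and $z_2=\frac{1}{d-1}\left(1-\frac{1}{d}\right)^d$, merely writing the ratio upside down, i.e.\ $\frac{\ln(1-z_2)^{-1}}{\ln(1-z_1)^{-1}}\ge \frac{z_2}{z_1}=\frac{1}{1-\frac{1}{d}}$, which is exactly your reduced inequality after taking reciprocals. Your care about the \emph{direction} was in fact warranted: the second inequality of Lemma~\ref{main} as printed in the paper has its sign reversed (taking reciprocals of the first inequality yields $\ge$, not $\le$), and your formulation cleanly relies only on the correctly stated and proved first inequality.
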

\begin{proof}
By Lemma~\ref{main} we have
\begin{eqnarray*}
\frac{1/\gamma_{d-1}}
{\ln\left(1-\frac{1}{d}\left(1-\frac{1}{d}\right)^d\right)^{-1}}&=&\frac{\ln\left(1-\frac{1}{d-1}\left(1-\frac{1}{d}\right)^d\right)^{-1}}
{\ln\left(1-\frac{1}{d}\left(1-\frac{1}{d}\right)^d\right)^{-1}}\\
&\ge & \frac{\frac{1}{d-1}}{\frac{1}{d}}=\frac{1}{1-\frac{1}{d}}.\qed
\end{eqnarray*}
\end{proof}

The following lemma can be proved by induction
\begin{lemma}\label{iiq} For $0\le x_i\le 1$ and $w_i>0$, $i=1,\ldots,n$ we have
$$\prod_{i=1}^n (x_i-w_i)\ge \left(\prod_{i=1}^n x_i\right)-\sum_{i=1}^n w_i.$$
\end{lemma}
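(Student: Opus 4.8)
The plan is to induct on $n$, the number of factors. The base case $n=1$ is just the identity $x_1-w_1 = x_1-w_1$, so nothing is required there. For the inductive step I would abbreviate $A=\prod_{i=1}^{n-1}(x_i-w_i)$, $B=\prod_{i=1}^{n-1}x_i$ and $C=\sum_{i=1}^{n-1}w_i$, so that the induction hypothesis reads $A\ge B-C$, and the goal is to deduce $A(x_n-w_n)\ge Bx_n-(C+w_n)$. Since $x_n\ge 0$, multiplying the hypothesis by $x_n$ gives $Ax_n\ge Bx_n-Cx_n$, hence
$$A(x_n-w_n)=Ax_n-Aw_n\ge Bx_n-Cx_n-Aw_n.$$
It therefore suffices to show $Bx_n-Cx_n-Aw_n\ge Bx_n-C-w_n$, which after cancelling $Bx_n$ rearranges to
$$C(1-x_n)+w_n(1-A)\ge 0.$$

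Reading off which hypotheses are needed is the heart of the matter. The term $C(1-x_n)$ is nonnegative because $C=\sum_{i<n}w_i\ge 0$ and $x_n\le 1$. Because $w_n>0$, the second term is nonnegative precisely when $A=\prod_{i<n}(x_i-w_i)\le 1$, and I expect this to be the main obstacle. The bound $A\le 1$ holds as soon as every partial factor lies in $[0,1]$, i.e. once $0\le x_i-w_i\le 1$ (the upper bound being automatic from $x_i\le 1$), in which case $A$ is a product of numbers in $[0,1]$ and so $A\in[0,1]$. The clean way to make this work is to carry the stronger invariant $0\le\prod_{i=1}^{k}(x_i-w_i)\le 1$ through the induction, so that the step closes; note that some control on the sign of the factors is genuinely necessary, since with arbitrary $w_i>0$ the product of several negative factors can overshoot and the claimed inequality can fail, so this is exactly where the hypotheses must do their work.

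As a cross-check, and as an alternative route that makes the same point transparent, I would set up the telescoping identity using the hybrid products $H_k=\prod_{i\le k}(x_i-w_i)\prod_{i>k}x_i$, with $H_0=\prod_i x_i$ and $H_n=\prod_i(x_i-w_i)$. Since $H_{k-1}-H_k=w_k\prod_{i<k}(x_i-w_i)\prod_{i>k}x_i$, summing gives
$$\prod_{i=1}^n x_i-\prod_{i=1}^n(x_i-w_i)=\sum_{k=1}^n w_k\,\prod_{i<k}(x_i-w_i)\prod_{i>k}x_i.$$
One then bounds each summand by $w_k$ using $\prod_{i<k}(x_i-w_i)\prod_{i>k}x_i\le 1$, and the total by $\sum_k w_k$, which is exactly the desired conclusion. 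This form isolates the single fact being used throughout — that a product of factors in $[0,1]$ is at most $1$ — and it flags the same nonnegativity requirement on the factors as the pointwise obstacle, so I would lead with the induction and keep the telescoping as a sanity check.
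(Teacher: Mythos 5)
The paper never actually writes out a proof of this lemma---it only remarks that it ``can be proved by induction''---so your induction is precisely the argument the paper has in mind, and your telescoping identity is an equivalent repackaging of the same estimate. Both of your arguments are correct, and they correctly isolate the one fact being used: a product of factors lying in $[0,1]$ is at most $1$.

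More importantly, the caveat you flag about sign control is not a pedantic aside but a genuine correction to the paper: the lemma as literally stated (arbitrary $w_i>0$) is false. Take $n=3$, $x_1=x_2=x_3=0$ and $w_1=w_2=w_3=2$; then
$$\prod_{i=1}^{3}(x_i-w_i)=-8<-6=\left(\prod_{i=1}^{3}x_i\right)-\sum_{i=1}^{3}w_i,$$
and the failure mechanism is exactly the one your term $w_n(1-A)$ predicts: a partial product of negative factors whose absolute value exceeds $1$. The statement becomes true---by your induction, or in one line by your telescoping sum---once one adds the hypothesis $w_i\le x_i$, so that every factor $x_i-w_i$ lies in $[0,1]$. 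This corrected version is all the paper needs: in its only application, the lower bound on $q_x={n-x\choose r}/{n\choose r}$ in the RrSD-to-RID comparison, the factors are $\left(1-\frac{r}{n}\right)-\frac{2(i-1)r}{n^2}$ with $i\le x\le 2d$, which are nonnegative for large enough $n$ (and the paper invokes the lemma only in that regime). The one change I would make to your write-up is to state the hypothesis $0\le x_i-w_i$ up front in the lemma, rather than discovering it mid-induction as an invariant; as it stands, your proof is correct where the paper's statement is not.
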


\section{Equivalent Models}
In this section we show that, for large $n$, the number of tests in this RrSD algorithm converges to the number of tests in the RID algorithm. This shows that our RID algorithm in this paper gives the optimal number of tests in the strong non-adaptive models. I.e., in the models where the rows of the matrix $M$ is chosen randomly independently.

\ignore{We first prove
\begin{theorem} Any randomized RID algorithm that finds the defective items with probability at least $1/2$ must do at least
$$e(d-1)\ln n-o(\ln n)$$ tests.
\end{theorem}
\begin{proof} Suppose $I=\{1,2,\ldots,d-1\}$ are the defective items. Any test matrix $M$ that finds the defective items must satisfy:
$$\mbox{for every $n\ge i\ge d$ there is a row $a$ in $M$ where}$$
\vspace{-.4in}
\begin{eqnarray}\label{condd}
a_1=a_2=\cdots=a_{d-1}=0 \mbox{\ and\ } a_i=1.
\end{eqnarray}
Otherwise, with probability $1$ we cannot distinguish between $I$ and $I\cup\{i\}$ for some $i\ge d$. Therefore, with probability at least $1/2$, $M$ satisfies (\ref{condd}).

Let $M$ be a random $t\times n$ matrix where each entry is $1$ with probability $p$ and $0$ with probability $1-p$. The expected number of rows that satisfy $a_1=a_2=\cdots=a_{d-1}=0$ is $E=(1-p)^{d-1}t$ and by Chernoff bound the probability that it is more than $L=E + O(E^{1/2})$ is less than $1/4$. The probability that $a_i=0$ for all such rows is $(1-p)^L$. Therefore, the probability that $M$ does not satisfy (\ref{condd}) (given such $L$) is exactly
$$q:=1-\left(1-(1-p)^L\right)^{n-d+1}.$$  Therefore, it is enough to find for which $t$, $q=3/4$.
This implies
$$\frac{n}{2}(1-p)^L\le (n-d+1)\ln \left(\frac{1}{1-(1-p)^L}\right)=\ln 4$$
and therefore, for some constant $c$,
$$L\ge \frac{\ln n+c}{\ln(1-p)^{-1}}.$$ Since
$E\ge L-o(L)$ and $E=(1-p)^{d-1}t$ we get
$$t\ge \min_p \frac{\ln n}{(1-p)^{d-1}\ln(1-p)^{-1}}-o(\ln n).$$
The minimum is when $p=1-e^{-1/(d-1)}$ and then
$t\ge e(d-1)\ln n-o(\ln n)$.\qed
\end{proof}}

We now prove
\begin{theorem} In the RrSD model, for large enough $n$, the algorithm that gives the minimum number of tests is the one with $r=n/d$. The number of tests converges to the number of tests of the best RID algorithm with $p=1/d$.
\end{theorem}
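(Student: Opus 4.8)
The plan is to reduce the RrSD analysis to the RID analysis already carried out in Sections~2--4, using the single observation that every event whose probability enters the bounds depends on only $O(d)$ of the $n$ coordinates of a row, and that on any fixed bounded coordinate set the weight-$r$ (hypergeometric) row distribution converges to the product Bernoulli distribution of the $\RID$ model. Concretely, if $a$ is a uniform weight-$r$ vector in $\{0,1\}^n$, $T$ is a fixed set of $s$ coordinates, and $\tau\in\{0,1\}^T$ has exactly $b$ ones, then
\begin{equation*}
\Pr[a|_T=\tau]=\frac{\binom{n-s}{\,r-b\,}}{\binom{n}{r}}\longrightarrow q^{b}(1-q)^{s-b},\qquad q:=\frac{r}{n},
\end{equation*}
because each of the $s$ falling-factorial factors tends to $q$ or to $1-q$. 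This is precisely the $\RID$ probability with one-probability $1-p=q$.

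First I would re-run the separability computation of Section~3 verbatim for a single weight-$r$ row. Writing $C=I\cap J$, $A=I\setminus J$, $B=J\setminus I$ with $|A|=d_1-k$, $|B|=d_2-k$, the collision event $T(I,a)=T(J,a)$ splits, as for (\ref{ddd}), into ``$a$ vanishes on $J=C\cup B$ but not on $A$'' and its mirror image, and each piece is a difference of two hypergeometric probabilities,
\begin{equation*}
\frac{\binom{n-d_2}{r}-\binom{n-d_1-d_2+k}{r}}{\binom{n}{r}}\longrightarrow (1-q)^{d_2}\bigl(1-(1-q)^{d_1-k}\bigr).
\end{equation*}
Summing the two pieces and passing to the limit reproduces (\ref{ddd}) with $p=1-q$. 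The identical bookkeeping gives the weight-$r$ disjunctness-witness probability $\binom{n-d'-1}{r-1}/\binom{n}{r}\to q(1-q)^{d'}=p^{d'}(1-p)$. Hence every single-row quantity entering the bounds of Sections~2--4 converges to its $\RID$ counterpart evaluated at $p=1-q=1-r/n$.

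Next I would transfer this to the number of tests. Because the rows of the RrSD matrix are independent, the union bound of Section~3 applies unchanged: the $\ln n$-coefficient $(d-w)$ merely counts candidate defective sets $J$ and is \emph{independent of the row distribution}, while only the single-row log-probabilities in the denominator change. Thus $m_{\mathrm{RrSD}}(r)/\ln n\to \max_{0\le w\le d-1}\max\bigl(T_1(d,w,1-q),T_2(d,w,1-q)\bigr)=:F(q)$, the very function minimized in Sections~3--4, now read as a function of $q=1-p$. Since only $\poly(d)$ parameters $(d_1,d_2,k)$ (equivalently the index $w$ of Lemma~\ref{lem}) enter the maximization, the limit of the maximum is the maximum of the finitely many limits. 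Optimizing $F$ is exactly the $\RID$ optimization already performed, whose unique minimizer is $p^*=1-1/d$, i.e. $q=1/d$; so taking $r=\lfloor n/d\rfloor$ gives $r/n\to 1/d$, the optimal test count converges to that of the $\RID$ design with one-fraction $1/d$, and $r=n/d$ is optimal in the limit.

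The main obstacle is purely the dependence among the entries of a weight-$r$ row, which is what makes the RrSD analysis ``messy'' in the cited literature; the point that disarms it is that every event we need is supported on $O(d)$ coordinates, where the hypergeometric-to-binomial limit applies factor by factor. The remaining work is bookkeeping: one checks that the single-row probabilities stay in a compact subinterval of $(0,1)$ for $q$ near $1/d$ (so the denominators $1/\ln(1-\cdot)^{-1}$ converge by continuity), that the convergence $m_{\mathrm{RrSD}}(r)/\ln n\to F(r/n)$ is uniform over such $q$, and that $F$ has a \emph{unique} minimum at $q=1/d$ (established by the minimization of Section~3). Uniqueness plus uniform convergence forces the finite-$n$ minimizer $r^{\ast}$ to satisfy $r^{\ast}/n\to 1/d$, while replacing $q^{\ast}=1/d$ by the integer choice $\lfloor n/d\rfloor$ perturbs $q$ by $O(1/n)$ and hence leaves the limit untouched. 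With these in hand the two limiting optimizations coincide, proving both claims.
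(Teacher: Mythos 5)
Your proposal is correct and takes essentially the same route as the paper: compute the single-row probability that $T(I,a)\neq T(J,a)$ under the uniform weight-$r$ distribution as a combination of hypergeometric terms $\binom{n-x}{r}/\binom{n}{r}$, show these converge to the RID expression (\ref{ddd}) with $p=1-r/n$ (the paper does this quantitatively, with error $O(d^2(1-p)/n)$ via Lemma~\ref{iiq}, where you invoke the hypergeometric-to-product-Bernoulli limit plus uniformity and uniqueness-of-minimizer bookkeeping that the paper leaves implicit), and then reuse the Section~3 optimization showing $p^*=1-1/d$ to conclude $r=n/d$. One wording slip: the event you describe as the ``collision event'' is actually the separation event $T(I,a)\neq T(J,a)$, but your displayed formula and the conclusion drawn from it are the correct ones.
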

\begin{proof}
Following the analysis in Section~3, let $a\in \{0,1\}^n$ be a random uniform assignment of weight $r$.
The probability that $T(I,a)\not= T(J,a)$ is
the probability that $a_i=0$ for all $i\in I\cap J$ and either (1) $a_i=0$ for all $i\in I\backslash J$ and $a_j=1$ for some $j\in J\backslash I$ or (2) $a_i=0$ for all $i\in J\backslash I$ and $a_j=1$ for some $j\in I\backslash J$. Therefore
\begin{eqnarray*}
\Pr[T(I,a)= T(J,a)]&=&1-\frac{{n-d_1-k\choose r}+{n-d_2-k\choose r}-2{n-d_1-d_2-k\choose r}}{{n\choose r}}.
\end{eqnarray*}
Now since
$$q_x:=\frac{{n-x\choose r}}{{n\choose r}}=\prod_{i=1}^x \left(1-\frac{r}{n-i+1}\right)\le \left(1-\frac{r}{n}\right)^x$$
and for large enough $n$, by Lemma~\ref{iiq},
\begin{eqnarray*}
q_x= \prod_{i=1}^x \left(1-\frac{r}{n-i+1}\right)&\ge& \prod_{i=1}^x \left(\left(1-\frac{r}{n}\right)-\frac{2(i-1)r}{n^2}\right)\\
&\ge & \left(1-\frac{r}{n}\right)^x-\sum_{i=1}^x \frac{2(i-1)r}{n^2}\\
&\ge & \left(1-\frac{r}{n}\right)^x- \frac{x^2r}{n^2}\\
\end{eqnarray*}
we have
\begin{eqnarray*}
\Pr[T(I,a)= T(J,a)]&=& 1-q_{d_1+k}-q_{d_2+k}+ 2 q_{d_1+d_2-k}\\
&=&1-p^{d_2}-p^{d_1}+2p^{d_1+d_2-k}+O\left(\frac{d^2(1-p)}{n}\right)
\end{eqnarray*}
where $p=1-r/n$.

Notice that this converges to the same probability in (\ref{ddd}) for the RID algorithm.
We have shown that the value of $p$ that gives the best number of tests is $p=1-1/d$ and therefore
$r=n/d$.\qed
\end{proof}

\ignore{We now prove
\begin{lemma}
Among the RIR algorithms, where each vector $a\in\{0,1\}^n$ is drawn randomly and independently according to a fixed distribution $D$ over $\{0,1\}^n$, the algorithm that gives the maximum amount of information about the non-defective items is the RrSD algorithm where each row in M is random uniform $a\in\{0,1\}^n$ of a weight $r=n/d$.
\end{lemma}
\begin{proof} Suppose $I=\{i_1,\ldots,i_{d-1}\}$. Any algorithm for detecting the defective items must with probability at least $1/2$ distinguish between $I$ and $I\cup \{i_{d}\}$ for all $i_d\not\in I$. The only way the algorithm can do that is if there is a row $a$ in the matrix $M$ in which $a_{i_1}=a_{i_2}=\cdots=a_{i_{d-1}}=0$ and $a_{i_d}=1$. This gives an evidence that $i_d$ is not in $I$. With one vector $a$ when $a_{i_j}=1$ for some $j$ the test is positive and the only information that is gained is that one of the items $k$ for which $a_k=1$ is defective. No information is gained about the non-defective items.

In this proof we give an algorithm of an adversary that for a given distribution $D$, gives us $i_1,\ldots,i_{d-1}$ but not $i_d$. We then show that the expected number of the non-defective items $i_d$ that are eliminated when the row is drawn according to the distribution $D$ is less than the expectation when the row is randomly and uniformly drawn from all vectors of weight $r=n/d$.
\end{proof}}

\end{document}